    \def\@citecolor{blue}
    \def\@urlcolor{blue}
    \def\@linkcolor{blue}
\def\orcidID#1{\smash{\href{http://orcid.org/#1}{\protect\raisebox{-1.25pt}{\protect\includegraphics{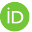}}}}}
\begin{document}

\title{Model-free Reinforcement Learning for Branching Markov Decision Processes\thanks{This work was supported by the Engineering and Physical Sciences Research Council through grant EP/P020909/1 and by the National Science Foundation through grant 2009022.
\includegraphics[height=8pt]{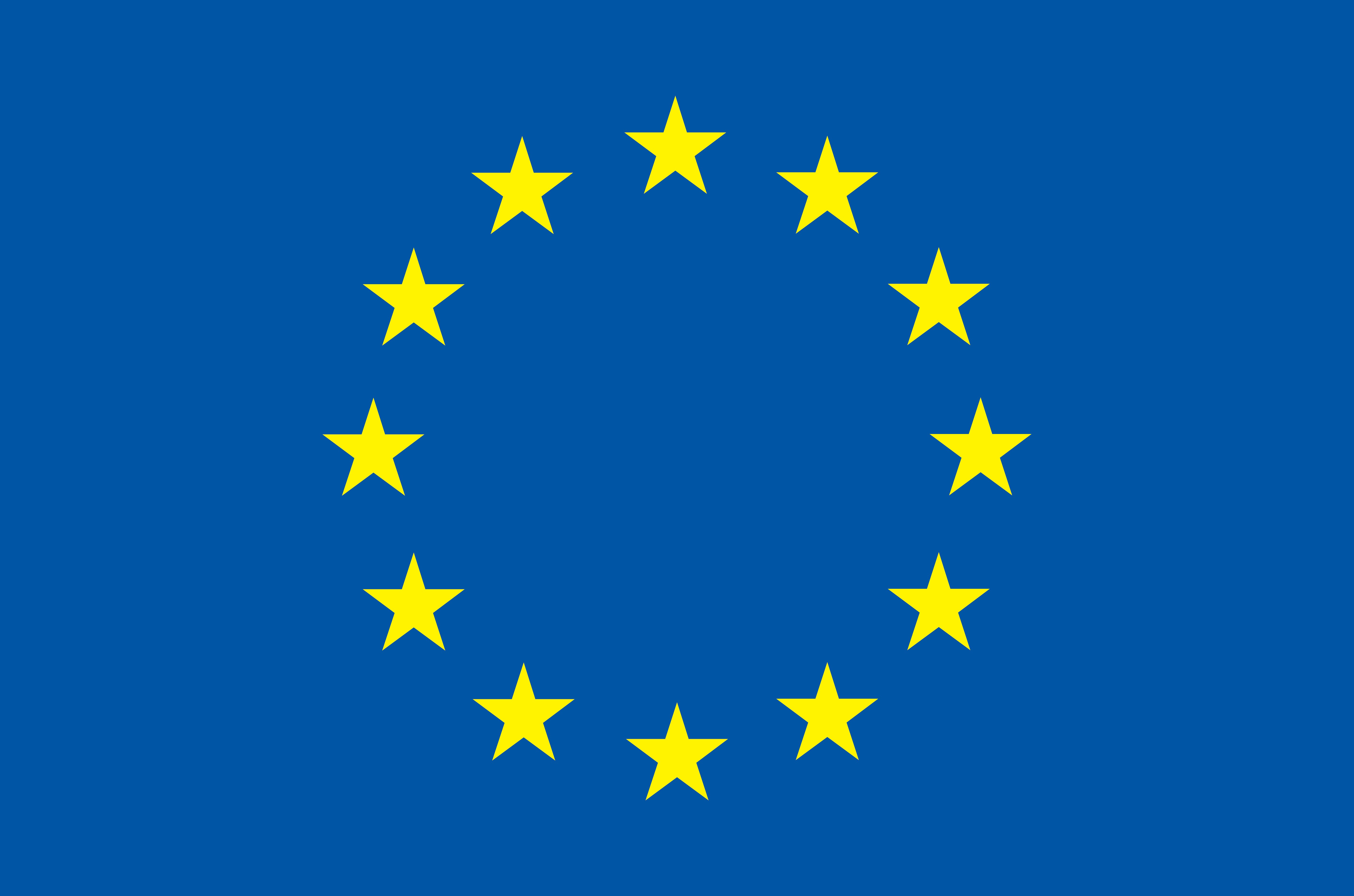} This project has received funding from the European Union’s Horizon 2020 research and innovation programme under grant agreements No 101032464 (SyGaST) and 864075 (CAESAR).
}}
\titlerunning{Reinforcement Learning for Branching Markov Decision Processes}                                                   

\author{
Ernst Moritz Hahn\inst{1}\orcidID{0000-0002-9348-7684}
\and Mateo Perez\inst{2}\orcidID{0000-0003-4220-3212} 
\and Sven Schewe\inst{3}\orcidID{0000-0002-9093-9518}
\and Fabio Somenzi\inst{2}\orcidID{0000-0002-2085-2003} 
\and Ashutosh Trivedi\inst{2}\orcidID{0000-0001-9346-0126}
\and Dominik Wojtczak\inst{3}\orcidID{0000-0001-5560-0546}
}                                                                                           
\institute{
  University of Twente, The Netherlands
  \and
  University of Colorado Boulder, USA
  \and
  University of Liverpool, UK
}                         
\authorrunning{E. M. Hahn, M. Perez, S. Schewe, F. Somenzi, A. Trivedi, and D. Wojtczak}      
\maketitle

\begin{abstract}
  We study reinforcement learning for the optimal control of Branching Markov Decision Processes (BMDPs), a natural extension of (multitype) Branching Markov Chains (BMCs). 
  The state of a (discrete-time) BMCs is a collection of entities of various types that, while spawning other entities, generate a payoff. 
  In comparison with BMCs, where the evolution of a each entity of the same type follows the same probabilistic pattern, BMDPs allow an external controller to pick from a range of options. 
  This permits us to study the best/worst behaviour of the system. 
  We generalise model-free reinforcement learning techniques to compute an optimal control strategy of an unknown BMDP in the limit.
  We present results of an implementation that demonstrate the practicality of the approach. 
\end{abstract}

\section{Introduction}
Branching Markov Chains (BMCs), also known as as Branching Processes, are natural
models of population dynamics and parallel processes.
The state of a BMC consists of entities of various types, and many entities of the
same type may coexist. 
Each entity can branch in a single step into a (possibly empty) set of entities of various types while disappearing itself.
This assumption is natural, for instance, for annual plants that reproduce only at a specific time of the year, or for bacteria, which either split or die.
An entity may spawn a copy of itself, thereby simulating the continuation of its existence.

The offspring of an entity is chosen at random among options
according to a distribution that depends on the type of the entity.
The type captures significant differences between entities.
For example, stem cells are very different from regular cells;
parallel processes may be interruptible or have different privileges.
The type may reflect characteristics of the entities such as their age or size.

Although entities coexist, the BMC model assumes that there
is no interaction between them.
Thus, how an entity reproduces and for how long it lives is
the same as if it were the only entity in the system.
This assumption greatly improves the computational complexity of the analysis
of such models and is appropriate when the population exists in an environment that has virtually unlimited resources to sustain its growth.
This is a common situation that holds when a species has just been introduced into an environment, in an early stage of an epidemic outbreak, or when running jobs in cloud computing.

BMCs have a wide range of applications in modelling various physical phenomena, such as
nuclear chain reactions, red blood cell formation, population genetics,
population migration, epidemic outbreaks, and molecular biology. 
Many examples of BMC models used in biological systems are discussed in \cite{HJV05}.

Branching Markov Decision Processes (BMDPs) extend BMCs by allowing a controller to choose the branching dynamics for each entity.
This choice is modelled as nondeterministic, instead of random.
This extension is analogous to how Markov Decision Processes (MDPs) generalise Markov chains (MCs)~\cite{Put94}.
Allowing an external controller to select a mode of branching allows us to study
the best/worst behaviour of the examined model.

As a motivating example, let us discuss a simple model of cloud computing.
A computation may be divided into tasks in order to finish it faster, as each server may have different computational power. 
Since the computation of each task depends on the previous one, the total running time is the sum of the running times of each spawned task as well as the time needed to split and merge the result of each computation into the final solution. 
As we shall see, the execution of each task is not guaranteed to be successful and is subject to random delays.  
Specifically, let us consider the following model with two different types ($T$ and $S$), and two actions ($a_1$ and $a_2$). This BMDP consists of the main task, $T$, that may be split (action $a_1$) into three smaller tasks, for simplicity assumed to be of the same type $S$, and this split and merger of the intermediate results takes 1 hour (1h). Alternatively (action $a_2$), we can execute the whole task $T$ on the main server, but it will be slow (8 hours). Task $S$ can (action $a_1$) be run on a reliable server in 1.6 hours or (action $a_2$) an unreliable one that finishes after 1 hour (irrespective of whether or not the computation is completed successfully), but with a 40\% chance we need to rerun this task due to the server crashing.
We can represent this model formally as:
\begin{align*}
    T &\stackrel{a_1}{\longrightarrow} S S S & [1\text{h}]     & &  S &\stackrel{a_1}{\longrightarrow} \epsilon & [1.6\text{h}] \\
    T &\stackrel{a_2}{\longrightarrow} \epsilon & [8\text{h}]  & & S &\stackrel{a_2}{\longrightarrow} 40\% : S \text{ or } 60\%: \epsilon & [1\text{h}]  
\end{align*}
We would like to know the infimum of the expected running time (i.e.\ the expected running time when optimal decisions are made) of task $T$. In this case the optimal control is to pick action $a_1$ first and then actions $a_1$ for all tasks S with a total running time of 5.8 hours. The expected running time when picking actions $a_2$ for $S$ instead would be $1 + 3 \cdot 1/0.6 = 6$ [hours].

Let us now assume that the execution of tasks $S$ for action $a_1$ may be interrupted with probability 30\% by a task of higher priority (type $H$).
Moreover, these $H$ tasks may be further interrupted by tasks with even higher priority (to simplify matters, again modelled by type $H$). The computation time of $T$ is prolonged by 0.1 hour for each $H$ spawned. Our model then becomes:
\begin{linenomath}
\begin{align*}
    T &\stackrel{a_1}{\longrightarrow} S S S & [1\text{h}]     & &  S &\stackrel{a_1}{\longrightarrow} 30\% : H \text{ or } 70\%: \epsilon & [1.6\text{h}] & & H &\stackrel{*}{\longrightarrow} & 30\% : HH \text{ or } \\
    T &\stackrel{a_2}{\longrightarrow} \epsilon & [8\text{h}]  & & S &\stackrel{a_2}{\longrightarrow} 40\% : S \text{ or } 60\%: \epsilon & [1\text{h}] & & & & 70\%: \epsilon \ \  [0.1\text{h}]
\end{align*}
\end{linenomath}
As we shall see, the expected total running time of $H$ can be calculated by solving the equation $x = 0.3 (x+x) + 0.1$, which gives $x = 0.25$ [hour].
So the expected running time of $S$ using action $a_1$ increases by $0.3 \cdot 0.25 = 0.075$ [hour].
This is enough for the optimal strategy of running $S$ to become $a_2$.
Note that if the probability of $H$ being interrupted is at least 50\% then the expected running time of $H$ becomes $\infty$.

When dealing with a real-life process,  it is hard to come up with a (probabilistic and controlled) model that approximates it well. 
This requires experts to analyse all possible scenarios and estimate the probability of outcomes
in response to actions based on either complex calculations or the statistical analysis of sufficient observational data. For instance, it is hard to estimate the probability of an interrupt $H$ occurring in the model above without knowing which server will run the task, its usual workload and statistics regarding the priorities of the tasks it executes.
Even if we do this estimation well, unexpected or rare events may happen that would require us to recalibrate the model as we observe the system under our control. 

Instead of building such a model explicitly first and fixing the probabilities of possible transitions in the system based on our knowledge of the system or its statistics, we advocate the use of reinforcement learning (RL) techniques~\cite{Sutton18} that were successfully applied to finding optimal control for finite-state Markov Decision Processes (MDPs). Q-learning~\cite{watkins1992q} is a well-studied model-free RL approach to compute an optimal control strategy without knowing about the model apart from its initial state and the set of actions available in each of its states. It also has the advantage that the learning process converges to the optimal control while exploiting along the way what it already knows. While the formulation of the Q-learning algorithm for BMDPs is straightforward, the proof that it works is not. This is because, unlike the MDPs with discounted rewards for which the original Q-learning algorithm was defined, our model does not have an explicit contraction in each step, nor does boundedness of the optimal values or one-step updates hold.
Similarly, one cannot generalise the result from \cite{even2003learning} that  estimates the time needed for the Q-learning algorithm to converge within $\epsilon$ of the optimal values with high probability for finite-state MDPs.

\subsection{Related work}

The simplest model of BMCs are Galton-Watson processes (\cite{WG74}),
discrete-time models where all entities are of the same type.
They date as far back as 1845 \cite{Heyde77} and were used to explain why some aristocratic
family surnames became extinct.
The generalisation of this model to multiple types of
entities was first studied in 1940s by Kolmogorov and Sevast'yanov
(\cite{KS47}). For an overview of the results known for BMCs, see
e.g. \cite{Har63} and \cite{HJV05}.
The precise computational complexity
of decision problems about the probabilities of extinction of an
arbitrary BMC was first established in \cite{EY09acm}. 
The problem of checking if a given BMC terminates almost surely 
was shown in \cite{journalsiplEsparzaGK13} to be strongly polynomial.
The probability of acceptance of a run of a BMC by a deterministic parity tree automaton was studied in \cite{ChenDK12} and shown to be computable in PSPACE and in polynomial time for probabilities 0 or 1. 
In \cite{kiefer2011probabilistic} a generalisation of the BMCs was considered that allowed for limited synchronisation of different tasks.

BMDPs, a natural generalisation of BMCs to a controlled setting, have been
studied in the OR literature (e.g., \cite{Pli77,RW82}).
Hierarchical MDPs (HMDPs) \cite{EtessamiY15} are a special case of BMDPs where there are no cycles in the offspring graph (equivalently, no cyclic dependency between types). 
BMDPs and HMDPs have found applications in manpower planning~\cite{udom2013markov}, controlled
queuing networks~\cite{jo1987optimal,brazdil-branching-networks}, management of livestock~\cite{nielsen2015markov}, and epidemic control~\cite{becker1977estimation,rao2008classical}, among
others.
The focus of these works was on optimising the expected average, or the
discounted reward over a run of the process, or optimising the population
growth rate.
In \cite{EtessamiY15} the decision problem whether the optimal probability of termination exceeds a threshold was studied: it was shown to be solvable in PSPACE and at least as hard as the square-root sum problem, but one can determine if the optimal probability is 0 or 1 in polynomial time.
In \cite{EtessamiSY20}, it was shown that the approximation of the optimal probability of extinction for BMDPs can be done in polynomial time.
The computational complexity of computing the optimal expected total cost before extinction for BMDPs follows from \cite{EtessamiWY19} and was shown there to be computable in polynomial time via a linear program formulation.
The problem of maximising the probability of reaching a state with an entity of a given type for BMDPs was studied in \cite{ETESSAMI2018355}.  
In \cite{trivedi2010timed} an extension of BMDPs with real-valued clocks and timing constraints on productions was studied.

\subsection{Summary of the results}
We show that an adaptation of the Q-learning algorithm converges almost surely to the optimal values for BMDPs under mild conditions: all costs are positive and each Q-value is selected for update independently at random. 
We have implemented the proposed algorithm in the tool \toolname{}\cite{mungojerrie} and tested its performance on small examples to demonstrate its efficiency in practice.
To the best of our knowledge, this is the first time model-free RL has been used for the analysis of BMDPs.

\section{Problem Definitions}
\subsection{Preliminaries}
We denote by $\Nat$ the set of non-negative integers, by $\Real$ the
set of reals,
by $\Rplus$ the set of positive reals, and by
$\Roplus$ the set of non-negative reals.  We let
$\Rinf = \Rplus \cup \set{\infty}$, 
and $\Roinf = \Roplus \cup \set{\infty}$.
We denote by $|X|$ the cardinality of a set
$X$ and by $X^*$ ($X^\omega$) the set of all possible finite (infinite) sequences of elements of $X$.  
Finite sequences are also called lists.

\paragraph{Vectors and Lists.}
We use $\bfx, \bfy, \bfc$ to denote vectors
and $\bfx_i$ or $\bfx(i)$ to denote its $i$-th entry. 
We let $\bfzero$ denote a vector with all entries equal to $0$; its size may vary depending on the context. Likewise $\bfone$ is a vector with all entries equal to $1$.
For vectors $\bfx, \bfy \in \Roinf^n$,
${\bfx} \leq {\bfy}$ means $x_i \leq y_i$ for every $i$,
and ${\bfx} < {\bfy}$ means ${\bfx} \leq {\bfy}$ and $x_i \neq y_i$ for some $i$. 
We also make use of the infinity norm $\|\bfx\|_\infty = \max_i |\bfx(i)|$. 

We use $\alpha, \beta, \gamma$ to denote finite lists of elements.
For a list $\alpha =  a_1, a_2, \ldots, a_k$ we write $\alpha_i$ for the $i$-th element $a_i$ of list $\alpha$ and $|\alpha|$ for its length.
For two lists $\alpha$ and $\beta$ 
we write $\alpha \cdot \beta$ for their concatenation.
The empty list is denoted by $\eps$. 

\paragraph{Probability Distributions.}
A \emph{finite discrete probability distribution} over a countable set $Q$ is
a function $\mu: Q {\to} [0, 1]$ such that $\sum_{q \in Q} \mu(q) {=} 1$ and
its support set $\supp(\mu) {=}\set{q \in Q \, | \, \mu(q) {>} 0}$ is finite.
We say that $\mu \in \DIST(Q)$ is a \emph{point distribution}
if $\mu(q) {=} 1$ for some $q \in Q$.

\paragraph{Markov Decision Processes.}
Markov decision processes (\cite{Put94}), are a well-studied
formalism for systems exhibiting nondeterministic and
probabilistic behaviour.

\begin{definition}
  A \emph{Markov decision process} (MDP) is a tuple
  $\mdp = (S, A, p, c)$ where:
  \begin{itemize}
  \item
    $S$ is the set of \emph{states};
  \item
    $A$ is the set of \emph{actions};
  \item
    $p : S \times A \to \DIST(S)$ is a partial function called the
    \emph{probabilistic transition function}; and  
  \item
    $c: S \times A \to \Real$ is the \emph{cost function}.
  \end{itemize}
\end{definition}

We say that an MDP $\mdp$ is \emph{finite} (\emph{discrete}) if both $S$
and $A$ are finite (countable).
We write $A(s)$ for the set of actions available at $s$, i.e., the
set of actions $a$ for which $p(s, a)$ is defined.
In an MDP $\mdp$, if the current state is $s$, then one of the actions in $A(s)$ is chosen nondeterministically.  If the chosen action
is $a$ then the probability of reaching state $s' \in S$ in the next step is
$p(s, a)(s')$ and the cost incurred is
$c(s,a)$.

\subsection{Branching Markov Decision Processes}

We are now ready to define (multitype) BMDPs. 

\begin{definition}
  A \emph{branching Markov decision process} (BMDP) is a tuple
  $\Bb = (\TT, A, p, c)$ where:
  \begin{itemize}
  \item
    $\TT$ is a finite set of \emph{types};
  \item
    $A$ is a finite set of \emph{actions};
  \item
    $p : \TT \times A \to \DIST(\TT^*)$ is a partial function called the
    \emph{probabilistic transition function} where every $\DIST(\cdot)$ is a finite discrete probability distribution; and 
  \item
    $c : \TT \times A \to \Rplus$ is the \emph{cost function}.
  \end{itemize}
\end{definition}
\vspace*{0.2cm}
 We write $A(q)$ for the set of actions available to an entity of type $q \in \TT$, i.e., 
 the set of actions $a$ for which $p(q, a)$ is defined.
A {\em Branching Markov Chain (BMC)} is simply a BMDP with just one action available for each type.

Let us first describe informally how BMDPs evolve.
A state of a BMDP $\Bb$ is a list of elements of
$\TT$ that we call \emph{entities}.
A BMDP starts at some initial configuration, $\alpha^0 \in \TT^*$,
and the controller picks for one of the entities one of the
actions available to an entity of its type.
In the new configuration $\alpha^1$, this one entity is replaced by the list of new
entities that it spawned.
This list is picked according to the probability distribution
$p(q,a)$ that depends both on the type of the entity, $q$, and the
action, $a$, performed on it by the controller.
The process proceeds in the same manner from $\alpha^1$,
moving to $\alpha^2$, and from there to $\alpha^3$, etc.
Once the state $\eps$ is reached, i.e., when no entities are present in the
system, the process stays in that state forever.

\begin{definition}[Semantics of BMDP]
  The semantics of a BMDP $\Bb = (\TT, A, p, c)$
  is an MDP
  $\mdp_\Bb = (\states_\Bb, \actions_\Bb, \prob_\Bb, \cost_\Bb)$
  where:
  \begin{itemize}
  \item
    $\states_\Bb = \TT^*$ is the set of states;
  \item
    $\actions_\Bb = \Nat \times A$ is the set of actions;
  \item
    $\prob_\Bb : \states_\Bb \times \actions_\Bb \to
    \DIST(\states_\Bb)$ is the probabilistic transition function such
    that, for $\alpha \in \states_\Bb$ and $(i,a) \in \actions_\Bb$, we
    have that $\prob_\Bb(\alpha, (i,a))$ is defined when
    $i \leq |\alpha|$ and $a \in A(\alpha_i)$; moreover
    \[
    \prob_\Bb(\alpha, (i,a))(\alpha_1 \ldots \alpha_{i-1} \cdot \beta \cdot \alpha_{i+1} \ldots) = p(\alpha_i, a)(\beta),
    \]
    for every $\beta \in \TT^*$ and $0$ in all other cases.
   \item
    $\cost_\Bb : \states_\Bb \times \actions_\Bb \to \Rplus$ is the
    cost function such that
    \[
    \cost_\Bb (\alpha, (i,a)) = c(\alpha_i, a).
    \]
  \end{itemize}
\end{definition}

For a given BMDP $\Bb$ and states $\alpha \in \states_\Bb$,
we denote by $\actions_\Bb(\alpha)$ the set of actions $(i,a) \in
\actions_\Bb$, for which $\prob_\Bb(\alpha, (i,a))$ is defined.

Note that our semantics of BMDPs assumes an explicit listing of all the entities in a particular order similar to \cite{EtessamiY15}. 
One could, instead, define this as a multi-set or simply a vector just counting the number of occurrences of each entity as in \cite{Pli77}. 
As argued in \cite{EtessamiY15}, all these models are equivalent to each other. 
Furthermore, we assume that the controller expands a single entity of his choice at the time rather all of them being expanded simultaneously. As argued in \cite{Wojtczak09}, that makes no difference for the optimal values of the expected total cost that we study in this paper, provided that all transitions' costs are positive.

\subsection{Strategies}
A \emph{path} of a BMDP $\Bb$ is a finite or infinite sequence
\begin{multline*}
  \pi = \alpha^0, ((i_1,a_1), \alpha^1), ((i_2,a_2), \alpha^2), ((i_3,a_3), \alpha^3),
\ldots\\
\in \states_\Bb \times {((\actions_\Bb {\times} \states_\Bb)^* \cup
(\actions_\Bb {\times} \states_\Bb)^\omega)},
\end{multline*}
consisting of the
initial state and a finite or infinite sequence of action and state pairs, such that
$\prob_\Bb(\alpha^j, (i_j,a_j))(\alpha^{j+1}) > 0$ for any $0 \leq j
\leq |\pi|$, where $|\pi|$ is the number of actions taken during path
$\pi$. ($|\pi| = \infty$ if the path is infinite.)
For a path $\pi$, we denote by $\pi_{A(j)} = (i_j, a_j)$ the
$j$-th action taken along path $\pi$, by $\pi_{S(j)}
(=\alpha^j)$ the $j$-th state visited, where $\pi_{S(0)} (=\alpha^0)$
is the initial state, and by $\pi(j) (=\alpha^0, ((i_1,a_1), \alpha^1), \ldots, ((i_j,a_j), \alpha^j))$ the first $j$ action-state pairs of $\pi$.

We call a path of infinite (finite) length a \emph{run}
(\emph{finite path}).
We write $\RUNS_\Bb$ ($\FRUNS_\Bb$) for the sets of all runs (finite
paths)  and $\RUNS_{\Bb, \alpha}$ ($\FRUNS_{\Bb, \alpha}$) for the
sets of all runs (finite paths) that start at a given initial state
$\alpha \in \states_\Bb$, i.e., paths $\pi$ with $\pi_{S(0)} =
\alpha$.
We write $\LAST(\pi)$ for the last state of a finite path $\pi$.

A \emph{strategy} in BMDP $\Bb$ is a function
$\sigma: \FRUNS_\Bb \to \DIST(\actions_\Bb)$ such that, for
all $\pi \in \FRUNS_\Bb$, $\supp(\sigma(\pi)) \subseteq
\actions_\Bb(\LAST(\pi))$.
We write $\Sigma_\Bb$ for the set of all strategies.
A strategy is called \emph{static}, if it always applies an action to the first entity in any state and for all entities of the same type in any state it picks the same action. 
A static strategy $\tau$ is essentially a function of the form $\sigma : P \to A$, i.e., for an arbitrary $\pi \in \FRUNS_\Bb$, we have
$\tau(\pi) = (1,\sigma(\LAST(\pi)_1))$ whenever $\LAST(\pi) \neq \epsilon$.

A strategy $\sigma \in \Sigma_\Bb$ and an initial state $\alpha$ induce a probability measure over the set of runs of BMDP $\Bb$ in the following way:
the basic open sets of $\RUNS_{\Bb}$ are of the form
$\pi \cdot (\actions_\Bb \times \states_\Bb)^\omega$, where
$\pi \in \FRUNS_\Bb$, and the measure of this open set is equal to
$\prod_{i=0}^{|\pi|-1} \sigma(\pi(i))(\pi_{A(i+1)})\cdot\prob_\Bb(\pi_{S(i)}, \pi_{A(i+1)})(\pi_{S(i+1)})$
if $\pi_{S(0)} = \alpha$ and equal to $0$ otherwise.
It is a classical result of measure theory that this extends to a
unique measure over all Borel subsets of $\RUNS_\Bb$ and we will
denote this measure by $\PROB^\sigma_{\Bb,\alpha}$.

Let $f : \RUNS_\Bb \to \Rinf$ be a function measurable with 
respect to $\PROB^\sigma_{\Bb,\alpha}$.
The expected value of $f$ under strategy $\sigma$ when starting at
$\alpha$ is defined as
$\eE^\sigma_{\Bb, \alpha} \set{f} = \int_{\RUNS_\Bb}\ f\
\text{d}\PROB^\sigma_{\Bb,\alpha}$ (which can be $\infty$ even if the probability that the value of $f$ is infinite is $0$).
The infimum expected value of $f$ in $\Bb$ when starting at $\alpha$
is defined as
$\minval(f) = \inf_{\sigma \in \Sigma_\Bb} \eE^\sigma_{\Bb, \alpha}
\set{f}$.
A strategy, $\opt{\sigma}$, is said to be optimal if
$\eE^{\opt{\sigma}}_{\Bb, \alpha} \set{f} = \minval(X)$
 and $\varepsilon$-optimal if
 $\eE^{\opt{\sigma}}_{\Bb, \alpha} \set{f} \leq \minval(f) + \varepsilon$.
Note that $\varepsilon$-optimal strategies always exists by definition.
We omit the subscript $\Bb$, e.g., in $\states_\Bb$, $\Sigma_\Bb$, etc.,
when the intended BMDP is clear from the context.

For a given BMDP $\Bb$ and $N \geq 0$ we define $\Total_N(\pi)$, the cumulative cost of a run $\pi$ after $N$ steps, as $\Total_N(\pi) = \sum_{i=0}^{N-1} \cost(\pi_{S(i)}, \pi_{A(i+1)}).$ For a configuration $\alpha \in \states$ and a strategy $\sigma \in \Sigma$, let $\ETotal_N(\Bb, \alpha, \sigma)$ be the \emph{$N$-step expected total cost} defined as
$\ETotal_N(\Bb, \alpha, \sigma) = \eE^\sigma_{\Bb, \alpha} \Set{\Total_N}$
and the \emph{expected total cost} be
$\ETotal_*(\Bb, \alpha, \sigma) = \lim_{N\to\infty} \ETotal_N(\Bb,\alpha,\sigma).$
This last value can potentially be $\infty$.
For each starting state $\alpha$, we compute the
\emph{optimal expected cost} over all strategies of a BMDP starting at
$\alpha$, denoted by $\ETotal_*(\Bb, \alpha)$, i.e.,
\[
\ETotal_*(\Bb, \alpha) = \inf_{\sigma \in \Sigma_{\Bb}} \ETotal(\Bb, \alpha,
\sigma).
\]
As we are going to prove in Theorem \ref{thm:lfp}.\ref{item:fixed-point} that, for any $\alpha\in\states$, we have 
\[
\ETotal_*(\Bb,\alpha) = \sum_{i=1}^{|\alpha|} \ETotal_*(\Bb,\alpha_i).
\]
This justifies focusing on this value for initial states that consist of a single entity only, as we will do in the following section.

\section{Fixed Point Equations}
\label{sec:fixed}
Following~\cite{EtessamiWY19}, we define here a
linear equation system with a minimum operator
whose {\em Least Fixed Point} solution
yields the desired optimal values for each type
of a BMDP with non-negative costs.
This system generalises the Bellman's equations for finite-state MDPs.
We use a variable $x_{q}$ for each unknown $\ETotal_*(\Bb, q)$ where $q \in \TT$. 
Let
$\bfx$ be the vector of all $x_q , \text{ where } q\in \TT$.
The system has one equation of the form $x_q= F_q(\bfx)$
for each type $q \in \TT$, defined as
\begin{linenomath}
\begin{align}
\tag{$\spadesuit$}
\label{eq:fixed-eq}
x_{q} = 
\min_{a \in A(q)} \Bigl(c(q,a) + \sum_{\alpha\in \TT^*} p(q,a)(\alpha) \sum_{i\leq |\alpha|} x_{\alpha_i} \Bigr) \enspace.
\end{align}
\end{linenomath}

\noindent We denote the system in vector form by $\bfx =
F(\bfx)$.  Given a BMDP, we can easily construct
its associated system in
linear time.
  Let ${\bfc^*} \in \Roinf^n$ denote the $n$-dimensional vector of
$\ETotal_*(\Bb, q)$'s where $n = |\TT|$. Let us  define $\bfx^0=\bfzero$,
$\bfx^{k+1}= F^{k+1}(\bfzero) = F(\bfx^{k})$, for $k \geq 0$.

\begin{theorem}\label{thm:lfp}
The following hold:
\hfill
\begin{enumerate}[(a)]
\item The map $F: \Roinf^n \to \Roinf^n$ is monotone and continuous (and so
$\bfzero \leq \bfx^k \leq \bfx^{k+1}$ for all $k \geq
0$).
\item $\bfc^*=F(\bfc^*)$.
\item For all $k \geq 0$, $\bfx^k \leq \bfc^*$.
\item \label{item:any-fixp-greater-than-c} For all $\bfc'\in \Roinf^n$, if $\bfc' =
F(\bfc')$, then $\bfc^* \leq \bfc'$.
\item $\bfc^* = \lim_{k\to\infty} \bfx^k$.
\end{enumerate}
\end{theorem}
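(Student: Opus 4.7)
The plan is to prove the five claims in order, with (a), (c), (e) being near-corollaries and the substantive steps being (b) and (d). For part (a), each inner expression $c(q,a) + \sum_{\alpha} p(q,a)(\alpha) \sum_{i\leq|\alpha|} x_{\alpha_i}$ is a finite non-negative linear combination of coordinates of $\bfx$ plus a positive constant (finiteness from $\supp(p(q,a))$ being finite and each $\alpha$ there being a finite list), so it is monotone and continuous on $\Roinf^n$ under the extended-reals topology; the finite minimum over $a \in A(q)$ preserves both properties; and monotonicity of the iterate chain follows by induction from $\bfzero \leq F(\bfzero)$. Part (c) then drops out once (b) is in hand: $\bfx^0 = \bfzero \leq \bfc^*$ and inductively $\bfx^{k+1} = F(\bfx^k) \leq F(\bfc^*) = \bfc^*$.

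The heart of part (b) is an additivity lemma $\ETotal_*(\Bb,\alpha) = \sum_{i} \ETotal_*(\Bb,\alpha_i)$ for every state $\alpha$, which I would establish as a separate step. The $\leq$ direction interleaves $\varepsilon$-optimal single-entity strategies $\sigma_i$ for each $\alpha_i$ into a combined strategy whose expected cost is exactly $\sum_i \eE^{\sigma_i}_{\Bb,\alpha_i}\set{\Total}$; the $\geq$ direction projects any strategy from $\alpha$ onto per-entity sub-strategies, each with expected cost at least $\ETotal_*(\Bb,\alpha_i)$. Both are sound because $\prob_\Bb$ depends only on the single entity being expanded, so sub-processes rooted at different entities are independent. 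Given additivity, $\bfc^*_q = F_q(\bfc^*)$ follows by a standard one-step conditioning argument: condition on the first action chosen by an $\varepsilon$-optimal strategy from $q$, apply additivity on the resulting offspring list, and let $\varepsilon \to 0$.

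For part (d), fix any fixed point $\bfc'$ and choose, for each type $r$, an action $a^*(r) \in A(r)$ attaining the minimum in $\bfc'_r = F_r(\bfc')$; let $\tau^*$ be the static strategy that expands the first entity using $a^*$. Extend $\bfc'$ to states by $\bfc'_\alpha := \sum_i \bfc'_{\alpha_i}$. A direct calculation using the greedy identity $\bfc'_r = c(r, a^*(r)) + \sum_\beta p(r, a^*(r))(\beta)\, \bfc'_\beta$ applied to the first entity of $\pi_{S(N)}$ shows that $M_N := \Total_N + \bfc'_{\pi_{S(N)}}$ is a non-negative martingale under $\PROB^{\tau^*}_{\Bb,q}$; hence $\bfc'_q = \eE^{\tau^*}_{\Bb,q}\set{M_0} = \eE^{\tau^*}_{\Bb,q}\set{M_N} \geq \eE^{\tau^*}_{\Bb,q}\set{\Total_N}$ for every $N$, and monotone convergence (positive costs give $\Total_N \nearrow \Total_\infty$) yields $\bfc^*_q \leq \ETotal_*(\Bb,q,\tau^*) \leq \bfc'_q$; the case $\bfc'_q = \infty$ is trivial. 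Part (e) then follows: by (c), $\lim_k \bfx^k \leq \bfc^*$; continuity of $F$ makes the limit a fixed point; applying (d) to it yields the reverse inequality, so equality.

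The two main obstacles are (i) the additivity lemma underlying (b), which needs an explicit measure-theoretic construction to combine and decompose strategies over disjoint sub-populations without disturbing the joint measure; and (ii) the martingale identity in (d), where one has to check that the argument survives infinite values of $\bfc'$ — but since $\bfc'_q < \infty$ forces $\bfc'_{\alpha_i} < \infty$ for every $\alpha \in \supp(p(q, a^*(q)))$ and every $i$, the process under $\tau^*$ starting from a finite-valued type stays in the finite regime almost surely, and the identity holds coordinate-wise as needed.
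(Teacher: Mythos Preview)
Your proposal is correct. Parts (a), (b), (c), and (e) proceed essentially as the paper does: (a) by noting that $F$ is a coordinatewise minimum of affine maps with non-negative coefficients; (b) via the additivity $\ETotal_*(\Bb,\alpha)=\sum_i \ETotal_*(\Bb,\alpha_i)$, obtained by composing $\varepsilon$-optimal single-entity strategies for one inequality and projecting for the other (the paper runs them sequentially rather than interleaving, but this is immaterial); (c) by monotonicity and (b); and (e) by passing to the limit and invoking (d).

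Where you genuinely diverge is in (d). The paper fixes the same greedy static strategy $\sigma'$ (your $\tau^*$) but argues by induction on the horizon: it shows $\ETotal_k(\Bb,q,\sigma')\leq \bfc'_q$ using the observation that, after one step produces offspring $\alpha$, allotting each $\alpha_i$ a full $k{-}1$ remaining steps is an overcount of the true $k$-step cost, so one can apply the inductive hypothesis to each offspring and then use the fixed-point identity for $\bfc'$. Your route instead packages the same cancellation into the martingale $M_N=\Total_N+\bfc'_{\pi_{S(N)}}$ under $\tau^*$ and reads off $\bfc'_q=\eE[M_N]\geq \eE[\Total_N]$. Both arguments use exactly the greedy identity $\bfc'_r = c(r,a^*(r))+\sum_\beta p(r,a^*(r))(\beta)\,\bfc'_\beta$; the paper's induction is more elementary and sidesteps any integrability bookkeeping, while your martingale formulation is cleaner and avoids the somewhat ad hoc overcounting step. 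Your handling of the infinite-coordinate case (finite $\bfc'_q$ forces all reachable states under $\tau^*$ to have finite $\bfc'$-value) is the right patch and is not needed in the paper's version because the inductive inequality $\ETotal_k\leq \bfc'_q$ is vacuous when $\bfc'_q=\infty$.
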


\begin{proof}

\mbox{\ }

\begin{enumerate}[(a)]

\item All equations in the system $F(x)$ are minimum of linear functions with non-negative coefficients and constants,
and hence monotonicity and continuity are preserved.

\item \label{item:fixed-point} 
It suffices to show that once action $a$ is taken when starting with a single entity $q$ and, as a result, 
$q$ is replaced by $\alpha$ with probability $p(q,a)(\alpha)$, then the expected total cost is equal to:
\begin{align} \tag{$\clubsuit$}
    c(q,a) + \sum_{i\leq |\alpha|} \ETotal_*(\Bb, \alpha_i) \enspace. \label{eq:exp-cost}
\end{align}
This is because then the expected total cost of picking action $a$ when at $q$ is 
just a weighted sum of these expressions with weights $p(q,a)(\alpha)$ for offspring $\alpha$.
And finally, to optimise the cost, one would pick an action $a$ with the smallest such expected total cost
showing that 
\[
\ETotal_*(\Bb, q) = \min_{a \in A(q)} \Big(c(q,a) + \sum_{\alpha\in \TT^*} p(q,a)(\alpha) \sum_{i\leq |\alpha|}  \ETotal_*(\Bb, \alpha_i) \Big)
\] 
indeed holds.

Now, to show (\ref{eq:exp-cost}), consider an 
$\epsilon$-optimal strategy $\sigma_i$ for a BMDP that starts at $\alpha_i$.
It can easily be composed into a strategy $\sigma$ that starts at $\alpha$ just by 
executing $\sigma_1$ first until all descendants of $\alpha_1$ die out, 
before moving on to $\sigma_2$, etc.
If one of these strategies, $\sigma_i$, never stops executing then, due to the assumption that all costs are positive, the expected total cost when starting with $\alpha_i$ has to be infinite and 
so has to be the overall cost when starting with $\alpha$ (as all descendants of $\alpha_i$ have to die out before the overall process terminates), so (\ref{eq:exp-cost}) holds.
This shows that $c(q,a) + \sum_{i\leq |\alpha|} \ETotal_*(\Bb, x_{\alpha_i})$ can be achieved when starting at $\alpha$.
At the same time, we cannot do better because that would imply the existence of a strategy $\sigma'$ for 
one of the entities $\sigma_j$ with a better cost than its optimal cost $\ETotal_*(\Bb, \alpha_j)$. 

\item\label{item:c-least-fixed-point} 
Since $\bfx^0 = \bfzero \leq \bfc^*$ and due to (\ref{item:fixed-point}), it follows by repeated application of $F$ to both sides of this inequality
that $\bfx^k \leq F(\bfc^*) = \bfc^*$, for all $k \geq 0$.

\item\label{item:part5} Consider any fixed point $\bfc'$ of the
equation system $F(\bfx)$. We will prove that $\bfc^* \leq
\bfc'$. Let us denote by $\sigma'$ a static strategy 
that picks for each type an action with the minimum value of operator $F$ in
$\bfc'$,  i.e., for each entity $q$ we choose $\sigma'(q)=\arg
\min_{a \in A(q)} \left(c(q,a) + \sum_{\alpha\in \TT^*} p(q,a)(\alpha) \sum_{i\leq |\alpha|} \bfc'_{\alpha_i} \right)$, where we break ties lexicographically. 

We now claim that, for all $k \geq 0$, $\ETotal_k(\Bb,q,\sigma') \leq \bfc'_q$ holds. For $k=0$, this is trivial as $\ETotal_k(\Bb,q,\sigma') = 0 \leq \bfc'_q$. For $k>0$, we have that
\begin{linenomath}
\begin{align*}
\ETotal_{k}(\Bb, q, \sigma') \stackrel{(1)}{\leq} \  &c(q,\sigma'(q)) {+} \sum_{\alpha\in \TT^*} p(q,\sigma'(q))(\alpha) \sum_{i\leq |\alpha|} \ETotal_{k{-}1}(\Bb, \alpha_i, \sigma')\\
 \stackrel{(2)}{\leq}\ & c(q,\sigma'(q)) + \sum_{\alpha\in \TT^*} p(q,\sigma'(q))(\alpha) \sum_{i\leq |\alpha|} \bfc'_{\alpha_i} \\
 \stackrel{(3)}{=}\ & \min_{a \in A(q)} \Big(c(q,a) + \sum_{\alpha\in \TT^*} p(q,a)(\alpha) \sum_{i\leq |\alpha|} \bfc'_{\alpha_i}\Big) \stackrel{(4)}{=}
\bfc'_q
\end{align*}
\end{linenomath}
where (1) follows from the fact that after taking action $\sigma'(q)$ first, there are only $k-1$ steps left of the BMDP $\Bb$ that would need to be distributed among the offspring $\alpha$ of $q$ somehow. Allowing for $k-1$ steps for each of the entities $\alpha_i$ is clearly an overestimate of the actual cost. (2) follows from the inductive assumption. (3) follows from the definition of $\sigma'$. The last equality, (4), follows from the fact that $\bfc'$ is a fixed point of $F$.

Finally, for every $q \in \TT$, from the definition we have $\bfc^*_q = \ETotal_{*}(\Bb, q) \leq \ETotal_{*}(\Bb, q, \sigma') = \lim_{k\to\infty} \ETotal_{k}(\Bb, q, \sigma')$ and each element of the last sequence was just shown to be $\leq \bfc'_q$. 

\item We know that $\bfx^* = \lim_{k\to\infty}\bfx^k$ exists in $\Roinf^n$
because it is a
monotonically non-decreasing sequence (note that some entries may be infinite).
In fact we have $\bfx^* = \lim_{k \to \infty} F^{k+1}(\bfzero) =
F(\lim_{k \to \infty} F^{k}(\bfzero))$,
and thus $\bfx^*$ is
a fixed point of $F$.
So from (\ref{item:part5}) we
have $\bfc^*  \leq \bfx^*$.
At the same time, due to (\ref{item:c-least-fixed-point}), we have $\bfx^k \leq \bfc^*$
for all $k \geq 0$, so $\bfx^* = \lim_{k\to\infty}\bfx^k \leq \bfc^*$ and
thus $\lim_{k\to\infty}\bfx^k = \bfc^*$.
\end{enumerate}
\qed 
\end{proof}

\noindent The following is a simple corollary of Theorem \ref{thm:lfp}.
\begin{corollary} \label{cor:optimal-static}
In BMDPs, there exists an optimal static control strategy $\sigma^*$.
\end{corollary}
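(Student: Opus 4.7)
The plan is to observe that the proof of Theorem~\ref{thm:lfp}\ref{item:part5} already constructs, from any fixed point $\bfc'$ of $F$, a static strategy $\sigma'$ whose expected total cost from every single-entity state $q$ is bounded above by $\bfc'_q$. Since by Theorem~\ref{thm:lfp}(b) the optimal value vector $\bfc^*$ is itself a fixed point of $F$, one simply applies that same construction to $\bfc^*$.

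Concretely, I would define the static strategy $\sigma^*$ by
\[
\sigma^*(q) = \arg\min_{a \in A(q)} \Bigl( c(q,a) + \sum_{\alpha \in \TT^*} p(q,a)(\alpha) \sum_{i \leq |\alpha|} \bfc^*_{\alpha_i} \Bigr),
\]
breaking ties lexicographically as in the proof of~\ref{item:part5}. The inductive argument there gives $\ETotal_k(\Bb, q, \sigma^*) \leq \bfc^*_q$ for every $k \geq 0$, hence $\ETotal_*(\Bb, q, \sigma^*) = \lim_{k \to \infty} \ETotal_k(\Bb, q, \sigma^*) \leq \bfc^*_q$. On the other hand, the definition of $\ETotal_*(\Bb, q)$ as the infimum over all strategies gives $\bfc^*_q = \ETotal_*(\Bb, q) \leq \ETotal_*(\Bb, q, \sigma^*)$. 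The two inequalities together yield equality, so $\sigma^*$ attains the optimum from every single-entity start state.

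To extend optimality to arbitrary initial configurations $\alpha \in \TT^*$, I would invoke the additive decomposition established while proving Theorem~\ref{thm:lfp}\ref{item:fixed-point}, namely $\ETotal_*(\Bb, \alpha) = \sum_{i \leq |\alpha|} \ETotal_*(\Bb, \alpha_i)$. Because $\sigma^*$ is static and chooses an action based only on the type of the entity it expands, its behaviour on the descendants of each $\alpha_i$ is independent of the other $\alpha_j$'s, so $\ETotal_*(\Bb, \alpha, \sigma^*) = \sum_{i \leq |\alpha|} \ETotal_*(\Bb, \alpha_i, \sigma^*) = \sum_{i \leq |\alpha|} \bfc^*_{\alpha_i} = \ETotal_*(\Bb, \alpha)$.

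There is no real obstacle here, since the work has already been done in the proof of Theorem~\ref{thm:lfp}; the only point requiring any care is the multi-entity case, which relies on the fact that a static strategy decomposes the process into independent sub-processes, one per initial entity, exactly as in the argument for part~\ref{item:fixed-point}.
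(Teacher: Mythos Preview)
Your proposal is correct and follows essentially the same approach as the paper: instantiate the static strategy $\sigma'$ from the proof of Theorem~\ref{thm:lfp}\ref{item:part5} with $\bfc'=\bfc^*$, use the bound $\ETotal_k(\Bb,q,\sigma^*)\leq\bfc^*_q$ established there, pass to the limit, and combine with the trivial reverse inequality. The paper's proof stops at single-entity initial states, so your extra paragraph extending optimality to arbitrary configurations via the additive decomposition from part~\ref{item:fixed-point} is a harmless (and helpful) elaboration rather than a departure.
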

\begin{proof}
It is enough to pick as $\sigma^*$, the strategy $\sigma'$ from Theorem \ref{thm:lfp}.\ref{item:part5}, for $\bfc' = \bfc^{*}$.
We showed there that for all $k \geq 0$ and $q\in \TT$ we have $\ETotal_k(\Bb,q,\sigma^*) \leq \bfc'_q$.
So $\ETotal_*(\Bb,q,\sigma^*) = \lim_{k\to\infty} \ETotal_k(\Bb,q,\sigma^*) \leq \bfc^*_q = \ETotal_*(\Bb,q)$, 
so in fact $\ETotal_*(\Bb,q,\sigma^*) = \ETotal_*(\Bb,q)$ has to hold as clearly
$\ETotal_*(\Bb,q,\sigma^*) \geq \ETotal_*(\Bb,q)$.
\qed
\end{proof}

Note that for a BMDPs with a fixed static strategy $\sigma$
(or equivalently BMCs),
we have that $F(\bfx) = B_\sigma \bfx + \bfc_\sigma$, for some non-negative
matrix $B_\sigma \in \Roplus^{n \times n}$, 
and a positive vector $\bfc_\sigma > 0$ consisting of all one step costs $c(q,\sigma(q))$. We will refer to $F$ as $F_\sigma$ in such a case and exploit this fact later in various proofs.

We now show that $\bfc^*$ is in fact essentially a unique fixed point of $F$.

\begin{theorem}
\label{thm:unique-fix}
If $F(\bfx) = \bfx$ and $\bfx_q < \infty$ for some $q\in \TT$ then $\bfx_q = \bfc^*_q$.
\end{theorem}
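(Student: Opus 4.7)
I would begin by observing that Theorem~\ref{thm:lfp}~(d) yields $\bfc^*_q \leq \bfx_q$, so only the reverse inequality $\bfx_q \leq \bfc^*_q$ remains. To tackle it I would set $\TT_f := \{r \in \TT : \bfx_r < \infty\} \ni q$ and introduce the static strategy $\sigma'$ that is greedy with respect to $\bfx$ on $\TT_f$: pick $\sigma'(r) \in \arg\min_{a \in A(r)} \bigl(c(r,a) + \sum_\alpha p(r,a)(\alpha) \sum_i \bfx_{\alpha_i}\bigr)$ for $r \in \TT_f$, and any action elsewhere. The critical structural fact is that $\TT_f$ is closed under $\sigma'$-transitions, since the argmin at $r \in \TT_f$ attains a finite value and hence every offspring $\alpha_i$ occurring with positive probability must satisfy $\bfx_{\alpha_i} < \infty$.

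On $\TT_f$ the Bellman equation therefore reduces to the linear system $\bfx|_{\TT_f} = B\,\bfx|_{\TT_f} + \bfc$, where $B := B_{\sigma'}|_{\TT_f}$ is non-negative and $\bfc := \bfc_{\sigma'}|_{\TT_f}$ is strictly positive. A Perron--Frobenius-style argument then forces the spectral radius of $B$ to be strictly below one: if some non-zero $v \geq 0$ were a left eigenvector of $B$ with eigenvalue $\lambda \geq 1$, left-multiplying the equation by $v^\top$ would produce the contradiction $(1-\lambda)\,v^\top \bfx|_{\TT_f} = v^\top \bfc > 0$. Hence $(I-B)^{-1} = \sum_{k \geq 0} B^k$ is finite and non-negative, and the linear system admits the unique non-negative solution $(I-B)^{-1}\bfc$. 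Since this solution coincides with the expected total cost of $\sigma'$ (by the comment following Corollary~\ref{cor:optimal-static}), I obtain $\bfx|_{\TT_f} = \bfc^*_{\sigma'}|_{\TT_f}$, and in particular $\bfx_q = \bfc^*_{\sigma', q}$.

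To finish, I need to show that $\sigma'$ is actually optimal at~$q$, i.e., $\bfc^*_{\sigma', q} \leq \bfc^*_q$. For any static strategy $\tau$, iterating the Bellman inequality $\bfx = F(\bfx) \leq F_\tau(\bfx)$ gives $\bfx \leq B_\tau^k \bfx + \sum_{j<k} B_\tau^j \bfc_\tau$ for every $k \geq 0$. When $\bfc^*_{\tau, q} < \infty$, the restriction of $B_\tau$ to the types reachable from $q$ under $\tau$ is subcritical by the same Perron--Frobenius reasoning, so $B_\tau^k \bfx \to \bfzero$ at~$q$ provided $\bfx$ is finite along this reachable set. Taking $k \to \infty$ then gives $\bfx_q \leq \bfc^*_{\tau, q}$, and the infimum over $\tau$ (noting that any $\tau$ with $\bfc^*_{\tau, q} = \infty$ is trivially no obstruction) yields $\bfx_q \leq \bfc^*_q$.

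The delicate step, which I expect to be the main obstacle, is precisely this last finiteness condition: showing that $\bfx$ is finite on every type reachable from $q$ under any $\tau$ with $\bfc^*_{\tau, q} < \infty$. This amounts to the inclusion $\{r : \bfc^*_r < \infty\} \subseteq \TT_f$. The plan is to apply the same spectral-radius / uniqueness argument to the BMC induced by an optimal static strategy $\sigma^*$ (from Corollary~\ref{cor:optimal-static}): the inequality $\bfx \leq F_{\sigma^*}(\bfx)$ restricted to the $\sigma^*$-reachable set $R^*$, combined with the fact that $\bfc^*|_{R^*}$ is the unique non-negative solution of the corresponding subcritical linear system, should force $\bfx|_{R^*}$ to be finite and in fact equal to $\bfc^*|_{R^*}$, thereby closing the argument.
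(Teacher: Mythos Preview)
Your overall route coincides with the paper's: both aim to bound $\bfx_q$ from above by iterating the inequality $\bfx = F(\bfx) \leq F_{\sigma^*}(\bfx) = B_{\sigma^*}\bfx + \bfc_{\sigma^*}$ for an optimal static $\sigma^*$, reaching
\[
\bfx_q \leq (B_{\sigma^*}^k \bfx)_q + \Bigl(\sum_{j<k} B_{\sigma^*}^j \bfc_{\sigma^*}\Bigr)_q
\]
and sending $k\to\infty$. The paper simply asserts $(B_{\sigma^*}^k \bfx)_q \to 0$ from entrywise convergence of the $q$-th row of $B_{\sigma^*}^k$ to zero; you are more scrupulous and correctly flag that this requires $\bfx$ to be finite on the set $R^*$ of types $\sigma^*$-reachable from $q$. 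Your detour through the $\bfx$-greedy strategy $\sigma'$ and its subcriticality on $\TT_f$ is sound but unnecessary: once you reach ``show $\bfx_q \leq \bfc^*_{\tau,q}$ for $\tau=\sigma^*$'' you are back to exactly the paper's argument.

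The gap you isolate is genuine and, as stated for BMDPs, unfixable. Take $\TT=\{q,r\}$, $A(q)=\{a_1,a_2\}$, $A(r)=\{b\}$; let $a_1$ produce $\epsilon$ at cost $10$, let $a_2$ produce the single offspring $r$ at cost $1$, and let $b$ produce $r$ or $\epsilon$ each with probability $\tfrac12$ at cost $1$. Then $\bfc^*=(3,2)$, yet $\bfx=(10,\infty)$ is also a fixed point of $F$: at $q$ the minimum is attained by $a_1$ since $a_2$ contributes $1+\infty$, and at $r$ one has $1+\tfrac12\cdot\infty=\infty$. Here $\bfx_q=10<\infty$ but $\bfx_q\neq\bfc^*_q=3$. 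Since $\sigma^*(q)=a_2$, we have $r\in R^*$ with $\bfx_r=\infty$, so your proposed closure step---using uniqueness of the finite solution of the subcritical linear system under $\sigma^*$ to force $\bfx|_{R^*}$ finite---cannot succeed: uniqueness only constrains \emph{finite} vectors, and the inequality $\bfx\leq F_{\sigma^*}(\bfx)$ is vacuous at infinite coordinates.

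What does survive is the BMC case, which is in fact the only way the result is used later. With a single action per type, the fixed-point \emph{equation} at $q$ (no $\min$ to hide behind) forces $\bfx_r<\infty$ for every $r$ reachable from $q$ in one step, hence inductively on all of $R^*$; types outside $R^*$ contribute nothing to $(B^k\bfx)_q$, and the limit argument goes through. For general BMDPs the conclusion also holds under the stronger hypothesis that $\bfx_r<\infty$ for \emph{all} $r\in\TT$.
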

\begin{proof}
By Corollary \ref{cor:optimal-static}, there exists an optimal static strategy, denoted by $\sigma^*$, which
yields the finite optimal reward vector $\bfc^*$.

We clearly have that $\bfx = F(\bfx) \leq F_{\sigma^*} (\bfx)$, because $\sigma^*$ is just one possible pick of actions for each type rather than the minimal one as in (\ref{eq:fixed-eq}).
Furthermore, 
\begin{align*}
F_{\sigma^*}(\bfx) &= B_{\sigma^*} \bfx + b_{\sigma^*} \\
& \leq B_{\sigma^*}(B_{\sigma^*} \bfx + b_{\sigma^*}) + b_{\sigma^*} \\
& = B^2_{\sigma} \bfx + (B_{\sigma^*} + 1) b_\sigma^* \\ 
& \leq \ldots \leq \lim_{k\to\infty} B^k_{\sigma^*} \bfx + \Bigl(\sum^\infty_{k=0}  B^k_{\sigma^*}\Bigr) b_{\sigma^*}.
\end{align*}

Note that $\bfc^* =  (\sum^\infty_{k=0}  B^k_{\sigma^*}) b_{\sigma^*}$,
because 
\[
\bfc^* = \lim_{k\to\infty} F^k(\bfzero) = \lim_{k\to\infty} F_{\sigma^*}^k(\bfzero) = \lim_{k\to\infty} \sum^k_{i=0} B^i_{\sigma^*} b_{\sigma^*}.
\]
Due to Theorem \ref{thm:lfp}.\ref{item:any-fixp-greater-than-c}, we know 
that $\bfc^*_q \leq \bfx_q < \infty $, so all entries in the $q$-th row of $B^k_{\sigma^*}$ have to converge to $0$ as $k\to\infty$, because otherwise the $q$-th row of $\sum^\infty_{k=0}  B^k_{\sigma^*}$ would have at least one infinite value and, as a result, the $q$-th position of $\bfc^* = (\sum^\infty_{k=0}  B^k_{\sigma^*}) b_{\sigma^*}$ would also be infinite as all entries of $b_{\sigma^*}$ are positive. Therefore,
$\lim_{k\to\infty} (B^k_{\sigma^*} \bfx)_q = 0$ and so 
\[
\bfx_q \leq (\lim_{k\to\infty} B^k_{\sigma^*} \bfx)_q + ((\sum^\infty_{k=0}  B^k_{\sigma^*}) b_{\sigma^*})_q = \bfc^*_q.
\]

The proof is now complete.\qed
\end{proof}

\section{Q-learning}
\def\lr{\lambda}

We next discuss the applicability of Q-learning to
the computation of the fixed point defined in the previous section. 

Q-learning~\cite{watkins1992q} is a well-studied model-free RL approach to
compute an optimal strategy for discounted rewards.
Q-learning computes so-called Q-values for every state-action pair.
Intuitively, once Q-learning has converged to the fixed point, $Q(s,a)$ is
the optimal reward the agent can get while performing action $a$ after starting
at $s$. 
The Q-values can be initialised arbitrarily, but ideally they
should be close to the actual values. 
Q-learning learns over a number of episodes, each consisting of a sequence
of actions with bounded length. 
An episode can terminate early if a sink-state or another non-productive state
is reached. 
Each episode starts at the designated initial state $s_0$. 
The Q-learning process moves from state to state of the MDP using one of its
available actions and accumulates rewards along the way. 
Suppose that in the $i$-th step, the process has reached state $s_i$. It then
either performs the currently (believed to be) optimal action 
(so-called {\em exploitation} option) or, with probability $\epsilon$,
picks uniformly at random one of the actions available at $s_i$ (so-called
{\em exploration} option). Either way, if $a_i$, $r_i$, and $s_{i+1}$ are the action picked, reward observed and the state the process moved to, respectively, then the Q-value is updated as follows:
\[
Q_{i+1}(s_i,a_i) = (1-\lr_i) Q_i(s_i,a_i) + \lr_i (r_i +
  \gamma \cdot \max_a Q_i(s_{i+1},a)) \enspace,
\]
where $\lr_i \in\, ]0,1[$ is the
learning rate and $\gamma \in\, ]0,1]$ is the discount factor. 
Note the model-freeness: this update does not depend on the set of transitions nor their probabilities.
For all other pairs $s,a$ we have $Q_{i+1}(s,a) = Q_i(s,a)$, i.e., they are left unchanged.
Watkins and Dayan showed the convergence of $Q$-learning \cite{watkins1992q}.
\begin{theorem}[Convergence\cite{watkins1992q}]
\label{thm:q-learning}
  For $\gamma < 1$,  bounded rewards $r_i$ and learning rates $0 \leq
  \lr_i < 1$ satisfying:
  \[ 
  \sum_{i=0}^{\infty} \lr_i  = \infty \text{ and } \sum_{i = 0} ^{\infty}
  \lr_i^2 < \infty,
  \]
  we have that $Q_i(s, a) \to Q(s, a)$ as $i \to \infty$ for all $s, a \in S
  {\times} A$ almost surely if all $(s,a)$ pairs are visited infinitely often.
\end{theorem}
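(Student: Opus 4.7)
The plan is to recast Q-learning as a noisy fixed-point iteration for a contraction in the sup-norm and then invoke a general stochastic approximation result. First I would introduce the Bellman optimality operator $\mathcal{T}$ on the finite-dimensional space $\mathbb{R}^{|S \times A|}$ defined by
\[
(\mathcal{T}Q)(s,a) = \mathbb{E}[r \mid s,a] + \gamma \sum_{s'} p(s,a)(s') \max_{a'} Q(s',a'),
\]
and verify using $|\max_{a'} x(a') - \max_{a'} y(a')| \leq \max_{a'}|x(a')-y(a')|$ that $\mathcal{T}$ is a $\gamma$-contraction in $\|\cdot\|_\infty$. By Banach's fixed point theorem, $\mathcal{T}$ has a unique fixed point $Q^*$, which is precisely the optimal Q-value function.

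Next I would rewrite the Q-learning update as a Robbins--Monro scheme. Letting $\tilde\lambda_i(s,a) = \lambda_i$ when $(s,a)=(s_i,a_i)$ and $0$ otherwise, the update reads
\[
Q_{i+1}(s,a) = (1-\tilde\lambda_i(s,a)) Q_i(s,a) + \tilde\lambda_i(s,a)\bigl[(\mathcal{T}Q_i)(s,a) + \eta_i(s,a)\bigr],
\]
where $\eta_i(s,a) = r_i + \gamma \max_{a'} Q_i(s_{i+1},a') - (\mathcal{T}Q_i)(s,a)$. Conditional on the history $\mathcal{F}_i$, this noise satisfies $\mathbb{E}[\eta_i \mid \mathcal{F}_i] = 0$, and boundedness of rewards plus boundedness of $Q_i$ (which I would verify by an easy induction using $\gamma<1$ and bounded $r_i$) yields a uniformly bounded conditional variance. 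Setting $\Delta_i = Q_i - Q^*$ and using the contraction property gives
\[
\|\mathcal{T}Q_i - Q^*\|_\infty = \|\mathcal{T}Q_i - \mathcal{T}Q^*\|_\infty \leq \gamma \|\Delta_i\|_\infty.
\]

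Finally I would appeal to the standard stochastic approximation lemma for sup-norm contractions (Jaakkola--Jordan--Singh / Tsitsiklis): for a process of the above form with zero-mean noise of bounded conditional variance and step sizes satisfying the Robbins--Monro conditions coordinatewise, $\Delta_i \to 0$ almost surely. The infinitely-often-visited assumption is exactly what ensures $\sum_i \tilde\lambda_i(s,a) = \infty$ and $\sum_i \tilde\lambda_i(s,a)^2 < \infty$ hold for every fixed pair $(s,a)$, which is what the lemma requires at each coordinate.

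The main obstacle is the stochastic approximation lemma itself, since the contraction is in the sup-norm rather than a Hilbert norm so classical Robbins--Monro arguments do not apply verbatim. The standard workaround is an inductive sandwich: one constructs two auxiliary processes driven only by the martingale noise (without the contractive feedback) that bound $\Delta_i$ from above and below, shows these auxiliary processes tend to zero by the martingale convergence theorem, and then argues by induction on the levels $\gamma^k \|\Delta_0\|_\infty$ that $\|\Delta_i\|_\infty$ is eventually trapped inside arbitrarily small balls around $0$. Since this is a well-established argument, I would cite it rather than reprove it, and focus the exposition on verifying that the Q-learning update exactly fits its hypotheses.
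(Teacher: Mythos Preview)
The paper does not prove this theorem at all: it is stated as a background result with a citation to Watkins and Dayan, and no proof or proof sketch is given in the paper. There is therefore nothing in the paper to compare your proposal against.

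That said, your sketch is a correct outline of the now-standard proof via stochastic approximation for sup-norm pseudo-contractions (Jaakkola--Jordan--Singh, Tsitsiklis). It is worth noting that this is \emph{not} the argument in the cited reference: the original Watkins--Dayan proof proceeds by constructing an auxiliary ``action-replay'' Markov chain and coupling the Q-learning iterates to finite-horizon returns in that chain, rather than by invoking a general stochastic approximation lemma. Your route is cleaner and more modular, and it is the one most modern treatments use; the trade-off is that it relies on a nontrivial black-box lemma whose proof (the inductive sandwich you describe) is itself somewhat delicate. One small point: the a~priori boundedness of $Q_i$ is not quite an ``easy induction'' unless you also note that the bound $\max\{\|Q_0\|_\infty, R/(1-\gamma)\}$ is invariant under the update, which is the right one-line argument.
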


However, in the total reward setting that corresponds to $Q$-learning with
discount factor $\gamma = 1$, $Q$-learning may not converge, or
converge to incorrect values.
However, it is guaranteed to work for finite-state MDPs
in the setting of undiscounted total reward with a target sink-state under the assumption
that all strategies reach that sink-state almost surely. 
The assumption that we make instead is that every transition of BMDP incurs a positive cost. This guarantees that a process that does not terminate almost surely generates an expected infinite reward in which case the Q-learning will coverage (or rather diverge) to $\infty$, so our results generalise these existing results for Q-learning.   

We adopt the Q-learning algorithm to minimise cost as follows.
Each episode starts at the designated initial state $q_0 \in P$. 
The Q-learning process moves from state to state of the BMDP using one of its
available actions and accumulates costs along the way. 
Suppose that, in the $i$-th step, the process has reached state $\alpha$. It then
selects uniformly at random one of the entities of $\alpha$, e.g., the $j$-th one, $\alpha_j$ and 
either performs the currently (believed to be) optimal action or, with probability $\epsilon$,
picks an action uniformly at random among all the actions available for $\alpha_j$. If $c$ and $\beta$ denote the observed cost and entities spawned by this action, respectively, then the Q-value of the pair $\alpha_j$, $a_i$ are updated as follows:
\[
Q_{i+1}(\alpha_j,a_i) = (1-\lr_i) Q_i(\alpha_j,a_i) + \lr_i \big(c + \sum_{i=1}^{|\beta|}
  \min_{a \in A(\beta_i)} Q_i(\beta_i,a)\big).
\]
and all other Q-values are left unchanged.
In the next section we show that Q-learning almost surely converges (diverges) to the optimal finite (respectively, infinite) value of $\bfc^*$ almost surely under rather mild conditions.

\section{Convergence of Q-Learning for BMDPs}
We show almost sure convergence of the Q-learning to the optimal values $\bfc^*$ in a number of stages.
We first focus on the case when all optimal values in $\bfc^*$ are finite. In such a case, we show a weak convergence of the expected optimal values for BMCs 
to the unique fixed-point $\bfc^*$, as defined in Section~\ref{sec:fixed}.
To establish this, we show that the expected Q-values are monotonically decreasing (increasing) if we start with Q-values $\kappa\bfc^*$ for $\kappa > 1$ ($\kappa < 1$). 
This convergence from above and below gives us convergence in expectation using the squeeze theorem.

We then establish almost sure convergence to $\bfc^*$ by proving a contraction argument,
with the extra assumption that the selection of the Q-value to update is done independently at random in each step.

In the next step, we extend this result to BMDPs, first establishing that Q-learning will almost surely converge to the \emph{region} of the Q-values less than or equal to $\bfc^*$.
We then show that, when considering the pointwise limes inferior values of the sequences of Q-values, there is no point in that region such that every $\varepsilon$-ball around it has a non-zero probability to be represented in the limes inferior.
This establishes that $\bfc^*$ is the fixed point the Q-values converge against.

Only at the very end, we show that Q-learning also converges (or rather diverges) to the optimal value even if that value happens to be infinite. 
We then turn to a type with non-finite optimal value and provide an argument for the divergence to $\infty$ of its corresponding Q-value.

We assume that all the Q-values are stored in a vector $Q$ of size $(|\TT| \cdot |A|)$.
We also use $Q(q,a)$ to refer to the entry for type $q\in \TT$ and action $a \in A(q)$.
We introduce the {\em target for $Q$ operator}, $T$, that maps a Q-values vector $Q$ to:

\[
T(Q)(q,a) = c(q,a) + \sum_{\alpha\in Q^*} p(q,a)(\alpha)
  \sum_{i=1}^{|\alpha|}
  \min_{a_i \in A(\alpha_i)} Q(\alpha_i,a_i) \enspace.
\]
We call $T$ the `target', because, when the $Q(q,a)$ value is updated, then
\[
\eE(Q_{i+1}(q,a)) = (1-\lr_i) Q_i(q,a) + \lr_i T(Q_i)(q,a)
\]
holds, whereas otherwise $Q_{i+1}(q,a) = Q_i(q,a)$.

Thus, when $Q(q,a)$ is selected for update with a chance of $p_{q,a}$, we have that
\begin{align}
\label{eq:exp-target}
\tag{$\heartsuit$}
    \eE(Q_{i+1}(q,a)) = (1-\lr_i p_{q,a}) Q_i(q,a) + \lr_i p_{q,a} T(Q_i)(q,a) \enspace.
\end{align}

\subsection{Convergence for BMCs with finite $\bfc^*$}
Since BMCs have only one action, we omit mentioning it for ease of notation.

Note that for BMCs, the target for the Q-values is a simple affine function:
\[
  T(Q)(q) = c(q) + \sum_{\alpha\in \TT^*} p(q)(\alpha)
  \sum_{i=1}^{|\alpha|}
Q(\alpha_i).
\]
And it coincides with operator $F$ as defined in Section \ref{sec:fixed}.
Therefore, due to Theorem \ref{thm:unique-fix}, $T(Q)$ has a unique fixed point which is $\bfc^*$. Moreover, $T(Q) = BQ + \bfc$, where $B$ is a non-negative matrix and $\bfc$ is a vector of one step costs $c(q)$, which are all positive. 

Naturally, applying $T$ to a non-negative vector $Q$ or multiplying it by $B$ are monotone: $Q \geq Q' \rightarrow T(Q) \geq T(Q')$ and $BQ \geq BQ'$. Also, due to the linearity of $T$, $\eE(T(Q)) = T(\eE Q)$ holds, where $Q$ is a random vector.

We now start with a lemma describing the behaviour of Q-learning for initial Q-values when they happen to be equal to $\kappa\bfc^*$ for some $\kappa \geq 1$.

\begin{lemma}
\label{lem:BP_up}
Let $Q_0 = \kappa \bfc^*$ for a scalar factor $\kappa \geq 1$. 
Then the following holds for all $i \in \mathbb N$, 
\[
\bfc^* \leq T(\eE Q_i) \leq \eE Q_{i+1} \leq \eE Q_i,
\]
assuming that Q-value to be updated in each step is selected independently at random.
\end{lemma}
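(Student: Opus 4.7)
The plan is to prove the full chain $\bfc^* \leq T(\eE Q_i) \leq \eE Q_{i+1} \leq \eE Q_i$ by induction on $i$, leveraging three structural facts specific to BMCs. First, the target operator is affine, $T(Q) = BQ + \bfc$ with $B \geq 0$ and $\bfc > 0$; in particular it is monotone and commutes with expectation, so $\eE(T(Q_i)) = T(\eE Q_i)$. Second, $\bfc^*$ is a fixed point of $T$, so $B\bfc^* + \bfc = \bfc^*$. Third, the independent random selection of the coordinate to update, combined with equation~(\ref{eq:exp-target}), forces each component of $\eE Q_{i+1}$ to be a convex combination of the corresponding components of $\eE Q_i$ and $T(\eE Q_i)$; hence whenever $T(\eE Q_i) \leq \eE Q_i$ holds, the sandwich $T(\eE Q_i) \leq \eE Q_{i+1} \leq \eE Q_i$ follows immediately.

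For the base case $i=0$, substituting $Q_0 = \kappa \bfc^*$ and applying the fixed-point identity gives
\[
T(\kappa \bfc^*) = \kappa B \bfc^* + \bfc = \kappa(\bfc^* - \bfc) + \bfc = \kappa \bfc^* - (\kappa - 1)\bfc.
\]
Since $\kappa \geq 1$ and $\bfc > 0$, this is $\leq \kappa \bfc^* = \eE Q_0$. For the lower bound, note that $\bfc^* = B\bfc^* + \bfc \geq \bfc$ because $B \geq 0$, so $(\kappa-1)\bfc^* \geq (\kappa-1)\bfc$, which rearranges to $\bfc^* \leq T(\kappa\bfc^*)$.

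For the inductive step, I assume $\bfc^* \leq T(\eE Q_{i-1}) \leq \eE Q_i \leq \eE Q_{i-1}$. Applying the monotone $T$ to $\bfc^* \leq \eE Q_i$ and using $T(\bfc^*) = \bfc^*$ yields $\bfc^* \leq T(\eE Q_i)$. Applying $T$ to $\eE Q_i \leq \eE Q_{i-1}$ and chaining with the hypothesis gives $T(\eE Q_i) \leq T(\eE Q_{i-1}) \leq \eE Q_i$, which is exactly the hypothesis $T(\eE Q_i) \leq \eE Q_i$ needed so the convex combination sandwiches $\eE Q_{i+1}$ between them, closing the induction.

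The only delicate point I anticipate is bookkeeping around the independence assumption: one must verify that the random coordinate chosen for update at step $i$ is independent of $Q_i$ (which is a function of all earlier randomness) so that conditioning decomposes coordinate-by-coordinate and the unconditional expectation really satisfies the convex-combination relation derived from~(\ref{eq:exp-target}). Once this is in hand, the affineness of $T$ (which is what lets $\eE$ commute with $T$) and monotonicity chasing reduce the rest of the argument to routine checking.
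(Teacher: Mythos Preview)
Your proposal is correct and follows essentially the same route as the paper: induction on $i$, using the affine form $T(Q)=BQ+\bfc$, the fixed-point identity $T(\bfc^*)=\bfc^*$, monotonicity of $T$, and the convex-combination update~(\ref{eq:exp-target}) to sandwich $\eE Q_{i+1}$. The only cosmetic difference is in the base-case lower bound $\bfc^* \leq T(\kappa\bfc^*)$: the paper obtains it directly from monotonicity applied to $\bfc^* \leq \kappa\bfc^*$, whereas you derive the equivalent inequality $(\kappa-1)\bfc \leq (\kappa-1)\bfc^*$ from $\bfc^* = B\bfc^* + \bfc \geq \bfc$.
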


\begin{proof}
We show this by induction. For the induction basis $(i=0)$, we have that $\bfc^* \leq Q_0$ by definition.

As $\bfc^*$ is the fixed-point of $T$, we have $T(\bfc^*)=\bfc^*$, and the monotonicity of $T$ provides $T(\bfc^*) \leq T(Q_0)$.
At the same time 
\begin{align*}
T(Q_0) = T(\kappa \bfc^*) &= B \kappa \bfc^* + \bfc\\
&= \kappa (B \bfc^* + \bfc) - \kappa \bfc + \bfc\\
&= \kappa \bfc^* - (\kappa -1) \bfc \\
&= Q_0 - (\kappa -1) \bfc \leq Q_0.    
\end{align*}

This provides $\bfc^* \leq T(\eE Q_0) \leq \eE Q_0$. Finally, $ T(\eE Q_0) \leq \eE Q_0$ entails for a learning rate $\lr_0 \in [0,1]$ that $T(\eE Q_0) \leq \eE Q_{1} \leq \eE Q_0$ due to (\ref{eq:exp-target}).

For the induction step ($i \mapsto i+1$), we use the induction hypothesis
\[
\bfc^* \leq T(\eE(Q_i)) \leq \eE(Q_{i+1}) \leq \eE(Q_i).
\]

The monotonicity of $T$ and $\bfc^* \leq \eE(Q_{i+1}) \leq \eE(Q_i)$ imply that $T(\bfc^*) \leq T(\eE(Q_{i+1})) \leq T(\eE(Q_i))$ holds. With $T(\bfc^*) = \bfc^*$ (from the fixed point equations) and the induction hypothesis,
$\bfc^* \leq T(\eE(Q_{i+1})) \leq \eE(Q_{i+1})$ follows.

Using $T(\eE Q_{i+1} )=\eE(T(Q_{i+1}))$, this provides $\eE(T(Q_{i+1})) \leq \eE Q_{i+1}$, which implies with $\lambda_{i+1} \in [0,1]$ that 
\[
T(\eE Q_{i+1})=\eE(T(Q_{i+1}))\leq \eE Q_{i+2} \leq \eE Q_{i+1}
\]
holds, completing the induction step. \qed
\end{proof}

By simply replacing all $\leq$ with $\geq$ in the above proof, we can get the following for all initial Q-values that happen to be $\kappa\bfc^*$ where $\kappa \leq 1$:

\begin{lemma}
\label{lem:BP_low}
Let $Q_0 = \kappa \bfc^*$ for a scalar factor $\kappa \in [0,1]$. Then the following holds for all $i \in \mathbb N$, assuming that the Q-value to update in each step is selected independently at random: $\bfc^* \geq T(\eE Q_i) \geq \eE Q_{i+1} \geq \eE Q_i$. \qed
\end{lemma}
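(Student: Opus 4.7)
The plan is to literally mirror the proof of Lemma \ref{lem:BP_up} with every inequality flipped, as the paper itself suggests. I would carry out induction on $i$, treating the base case $i = 0$ and the induction step separately, using the same three ingredients: monotonicity of $T$, the fixed-point identity $T(\bfc^*) = \bfc^*$, and the update rule (\ref{eq:exp-target}), which expresses $\eE Q_{i+1}(q,a)$ as a convex combination of $Q_i(q,a)$ and $T(Q_i)(q,a)$ and therefore sandwiches it between those two values componentwise.

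For the base case, I would note that $Q_0 = \kappa\bfc^* \leq \bfc^*$ because $\kappa \leq 1$ and $\bfc^* \geq \bfzero$, so monotonicity of $T$ together with $T(\bfc^*)=\bfc^*$ yields $T(Q_0) \leq \bfc^*$. For the reverse inequality, I would exploit the affine form $T(Q) = BQ + \bfc$ to compute $T(\kappa\bfc^*) = \kappa\bfc^* + (1-\kappa)\bfc$, which exceeds $Q_0$ componentwise because $(1-\kappa)\bfc \geq \bfzero$ when $\kappa \leq 1$ and $\bfc > \bfzero$. This is the only step where the sign-flip hypothesis actually bites; it is the exact mirror of the $(\kappa-1)\bfc \geq \bfzero$ argument in Lemma \ref{lem:BP_up}. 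Chaining the two bounds gives $\bfc^* \geq T(\eE Q_0) \geq \eE Q_0$, and the convex-combination interpretation of (\ref{eq:exp-target}) then interpolates $\eE Q_1$ between $\eE Q_0$ and $T(\eE Q_0)$, producing the desired four-term chain.

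For the induction step $i \mapsto i+1$, the hypothesis $\bfc^* \geq T(\eE Q_i) \geq \eE Q_{i+1} \geq \eE Q_i$ combined with monotonicity of $T$ and $T(\bfc^*) = \bfc^*$ delivers $\bfc^* \geq T(\eE Q_{i+1}) \geq \eE Q_{i+1}$. Linearity of $T$ (which lets me commute it with expectation) and one more application of (\ref{eq:exp-target}) then place $\eE Q_{i+2}$ componentwise between $\eE Q_{i+1}$ and $T(\eE Q_{i+1})$, closing the induction. I do not foresee any genuine obstacle, as the proof is purely mechanical once one observes that the only asymmetric algebraic ingredient — deriving $T(Q_0) \geq Q_0$ rather than $T(Q_0) \leq Q_0$ — rests on the same identity $T(\kappa \bfc^*) = \kappa \bfc^* + (1-\kappa)\bfc$ used in the opposite regime.
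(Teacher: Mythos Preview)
Your proposal is correct and follows exactly the approach the paper indicates: mirror the proof of Lemma~\ref{lem:BP_up} with all inequalities reversed, using the same three ingredients (monotonicity of $T$, the fixed-point identity $T(\bfc^*)=\bfc^*$, and the convex-combination structure of the update rule~(\ref{eq:exp-target})), with the key algebraic identity $T(\kappa\bfc^*)=\kappa\bfc^*+(1-\kappa)\bfc$ now exploited for $\kappa\leq 1$ instead of $\kappa\geq 1$. The paper itself gives no separate proof beyond stating that one should flip the inequalities in Lemma~\ref{lem:BP_up}, which is precisely what you have carried out in detail.
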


We now first establish that the distance between $Q$ and $\bfc^*$ can be upper bounded by the distance between $Q$ and $T(Q)$ with a fixed linear factor $\mu > 0$. 

\begin{lemma}
\label{lem:fixed-factor}
There exists a constant $\mu > 0$ such that
\[
\sum_{q \in \TT}|(Q - T(Q)(q)| \geq \mu \sum_{q \in \TT}|(Q - \bfc^*(q)|
\]
when $Q_0 = \kappa \bfc^*$.
\end{lemma}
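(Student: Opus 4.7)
The plan is to exploit that on BMCs the target operator $T$ is affine and reduce the statement to a bounded-inverse bound for $I-B$. As noted at the start of this subsection, $T(Q) = BQ + \bfc$, where $B \in \Roplus^{n \times n}$ is a nonnegative matrix and $\bfc > \bfzero$ is the vector of one-step costs; moreover $\bfc^*$ is a fixed point of $T$, so $\bfc^* = B\bfc^* + \bfc$. Subtracting the two equations yields the key identity
\[
Q - T(Q) \;=\; (Q - \bfc^*) - B(Q - \bfc^*) \;=\; (I-B)(Q - \bfc^*),
\]
so it suffices to find a constant $\mu > 0$ such that $\|(I-B)v\|_1 \geq \mu \|v\|_1$ for every $v$.

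Next I would reuse the Neumann-series argument already employed in the proof of Theorem \ref{thm:unique-fix}. Since $\bfc^*$ is finite componentwise, the series $M := \sum_{k \geq 0} B^k$ converges entry-wise to $(I-B)^{-1}$ and satisfies $\bfc^* = M\bfc$. Let $c_{\min} := \min_{q\in\TT} c(q) > 0$; this is positive because $\TT$ is finite and all one-step costs are strictly positive. Then
\[
\bfc^*_i \;=\; \sum_j M_{ij}\, \bfc_j \;\geq\; c_{\min} \sum_j M_{ij},
\]
so each row sum of the nonnegative matrix $M$ is bounded by $K := \|\bfc^*\|_\infty / c_{\min} < \infty$. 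This is precisely the induced $\ell_\infty$-operator norm of $M$.

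Setting $v = Q - \bfc^*$, the bound $\|Mw\|_\infty \leq K \|w\|_\infty$ for all $w$ gives $\|v\|_\infty = \|M(I-B)v\|_\infty \leq K \|(I-B)v\|_\infty$. Passing to the $\ell_1$-norm used in the lemma via $\|w\|_\infty \leq \|w\|_1$ and $\|w\|_1 \leq n \|w\|_\infty$ with $n = |\TT|$ then yields
\[
\sum_{q\in\TT} \bigl|(Q - T(Q))(q)\bigr| \;\geq\; \|(I-B)v\|_\infty \;\geq\; \frac{\|v\|_\infty}{K} \;\geq\; \frac{1}{nK}\sum_{q\in\TT} \bigl|(Q - \bfc^*)(q)\bigr|,
\]
so $\mu := 1/(nK)$ works.

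Note that the inequality is purely structural in $B$ and $\bfc$, so it in fact holds for every $Q \in \Real^n$; the hypothesis $Q_0 = \kappa \bfc^*$ pertains to the wider convergence argument in which this lemma is later applied, not to the proof of the inequality itself. The only step requiring a little care is the finiteness of the row sums of $M$, but this is already implicit in the proof of Theorem \ref{thm:unique-fix}, so I do not expect any real obstacle.
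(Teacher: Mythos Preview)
Your proof is correct, but it follows a genuinely different route from the paper's. The paper argues via a linear programme: for fixed $\delta>0$ it minimises $\sum_q |(Q-T(Q))(q)|$ subject to $Q\geq\bfc^*$, $T(Q)\leq Q$, and $\sum_q (Q-\bfc^*)(q)=\delta$, observes that the minimum is strictly positive because $\bfc^*$ is the unique fixed point of $T$, and then substitutes $Q_\Delta = Q-\bfc^*$ to see that the constraint system is homogeneous apart from the normalisation $\sum_q Q_\Delta(q)=\delta$, so the optimal value scales linearly in $\delta$, yielding the constant $\mu$. Your argument instead exploits directly that $Q-T(Q)=(I-B)(Q-\bfc^*)$ and bounds $\|(I-B)^{-1}\|_\infty$ by the Neumann series, using $\bfc^*=\big(\sum_{k\geq 0}B^k\big)\bfc$ and $\bfc>\bfzero$ to control the row sums. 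What your approach buys is an explicit constant $\mu = c_{\min}/(n\,\|\bfc^*\|_\infty)$ and a statement valid for \emph{all} $Q$, not only those satisfying the order constraints $Q\geq\bfc^*$, $T(Q)\leq Q$ inherited from Lemma~\ref{lem:BP_up}; the paper's LP proof, on the other hand, never needs to invoke the convergence of $\sum_k B^k$ or the boundedness of $(I-B)^{-1}$ explicitly, relying instead on uniqueness of the fixed point and a scaling argument.
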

\begin{proof}
We show this for $\kappa > 1$. The proof for $\kappa < 1$ is similar, and there is nothing to show for $\kappa = 1$.

We first consider the linear programme with a variable for each type with the following constraints for some fixed $\delta>0$:
\[
Q \geq \bfc^*, T(Q) \leq Q, \text{ and } \sum_{q \in \TT}Q(q) = \sum_{q \in \TT} \bfc^*(q) + \delta.
\]

An example solution to this constraint system is
$Q = (1 + \frac{\delta}{\sum_{q \in \TT}\bfc^*(q)}) \bfc^*$.

We then find a solution minimising the objective $\sum_{q \in \TT}|(Q - T(Q)(p)|$, noting that all entries are non-negative due to the first constraint.
This is expressed by adding $2|\TT|$ constraints 
\begin{align*}
x_q &\geq Q(q) - T(Q)(q) \\
x_q &\geq T(Q)(q) - Q(q)
\end{align*}
and minimising $\sum_{q \in \TT}x_q$.

As $\bfc^*$ is the only fixed-point of $T$, and $\sum_{q \in \TT}Q(q) = \sum_{q \in \TT} \bfc^*(q) + \delta$ implies that, for an optimal solution $Q^*$, $Q^* \neq \bfc^*$, we have that
\[
\sum_{q \in \TT} |(Q^* - T(Q^*)(q)| > 0.
\]
Due to the constraint $Q \geq \bfc^*$, we always have $Q = \bfc^* + Q_\Delta$ for some
$Q_\Delta > \bfzero$.
We can now re-formulate this linear programme to look for $Q_\Delta$ instead of $Q$:
\begin{align*}
    Q_\Delta &\geq \bfzero,\\
    B Q_\Delta & \leq Q_\Delta, \text{ and } \\
    \sum_{q \in \TT} Q_\Delta(q) &= \delta,
\end{align*} 
with the objective to minimise $\sum_{q \in \TT} | ( Q_\Delta - B Q_\Delta)(q)|$.

The optimal solution $Q_\Delta^*$ to this linear programme gives an optimal value $Q^* = \bfc^*+Q_\Delta^*$ for the former and, vice versa, the value $Q^*$ for the former provides an optimal solution $Q_\Delta^* - \bfc^*$ for the latter, and these two solutions have the same value in their respective objective function.

Thus, while the former constraint system is convenient to show that the value of the objective function is positive, the latter constraint system is, except for $\sum_{q \in \TT} Q_\Delta(q) =\delta$, linear. This means that any optimal solution for $\delta = \delta_1$ can be obtained from the optimal solution for $\delta = \delta_2$ just by rescaling it by $\delta_1/\delta_2$. It follows that the optimal value of the objective function is linear in $\delta$, e.g., there exists $\mu > 0$ such that its value is $\mu \delta$. 
\qed
\end{proof}

We now show that the sequence of Q-values updates converges in expectation to $\bfc^*$
when $Q_0 = \kappa \bfc^*$.

\begin{lemma}
\label{lem:contract_up}
Let $Q_0 = \kappa \bfc^*$ where $\kappa \geq 0$.
Then,  assuming that each type-action pair is selected for update with a minimal probability $p_{\min}$ in each step, and that $\sum_{i=0}^\infty \lr_i = \infty$, then
$\lim_{i \rightarrow \infty} \eE Q_i = \bfc^*$ holds.
\end{lemma}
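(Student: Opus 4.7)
The plan is to combine the monotonicity furnished by Lemmas \ref{lem:BP_up}/\ref{lem:BP_low} with the linear lower bound of Lemma \ref{lem:fixed-factor} and a summability argument driven by $\sum_i \lr_i = \infty$. The fact that $T$ is affine for BMCs, so $\eE T(Q_i) = T(\eE Q_i)$, is crucial: it turns the Q-learning expectation dynamics into a deterministic recursion
\[
\eE Q_{i+1}(q) - \eE Q_i(q) \;=\; \lr_i\, p_q \bigl(T(\eE Q_i)(q) - \eE Q_i(q)\bigr),
\]
with $p_q \geq p_{\min}$, obtained from (\ref{eq:exp-target}).

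\textbf{Monotone convergence and summability.} I first invoke Lemma \ref{lem:BP_up} (for $\kappa \geq 1$) or Lemma \ref{lem:BP_low} (for $\kappa \in [0,1]$) to get that $\eE Q_i$ is componentwise monotone and bounded between $\min(\kappa,1)\bfc^*$ and $\max(\kappa,1)\bfc^*$, so it converges componentwise to a finite limit $L$ lying on the appropriate side of $\bfc^*$. Since the drift $T(\eE Q_i) - \eE Q_i$ then has constant sign, I telescope the recursion to obtain, for each $q \in \TT$,
\[
\sum_{i=0}^{\infty} \lr_i\, p_q\, \bigl|T(\eE Q_i)(q) - \eE Q_i(q)\bigr| \;=\; |\eE Q_0(q) - L(q)| \;<\; \infty.
\]
Summing over $q$ and using $p_q \geq p_{\min}$ gives $\sum_i \lr_i \sum_q |T(\eE Q_i)(q) - \eE Q_i(q)| < \infty$. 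Since every $\eE Q_i$ satisfies the one-sided hypothesis of Lemma \ref{lem:fixed-factor} (either $\eE Q_i \geq \bfc^*$ with $T(\eE Q_i) \leq \eE Q_i$, or the dual), that lemma supplies a constant $\mu > 0$, independent of $i$, with $\sum_q |T(\eE Q_i)(q) - \eE Q_i(q)| \geq \mu \sum_q |\eE Q_i(q) - \bfc^*(q)|$, whence
\[
\sum_{i=0}^{\infty} \lr_i \sum_{q \in \TT} \bigl|\eE Q_i(q) - \bfc^*(q)\bigr| \;<\; \infty.
\]

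\textbf{Pinning down $L$; the main obstacle.} The non-negative sequence $\sum_q |\eE Q_i(q) - \bfc^*(q)|$ is itself monotonically non-increasing, because $\eE Q_i$ converges monotonically toward $L$ from one side of $\bfc^*$ coordinatewise. If its limit were some $\delta > 0$, then $\sum_i \lr_i \sum_q |\eE Q_i(q) - \bfc^*(q)| \geq \delta \sum_i \lr_i = \infty$, contradicting the previous display; hence the limit is $0$ and $L = \bfc^*$. The main obstacle is the uniformity of $\mu$: the contraction ratio must not decay as $\eE Q_i$ nears $\bfc^*$. This is exactly what Lemma \ref{lem:fixed-factor} secures, because its LP in the offset variables $Q_\Delta = Q - \bfc^*$ is homogeneous in $\delta = \sum_q Q_\Delta(q)$, so the optimum scales linearly in $\delta$ and $\mu$ is a structural constant of the BMC, independent of $i$. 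This scale invariance is what lets the summability argument close the proof.
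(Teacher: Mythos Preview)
Your proof is correct and uses essentially the same ingredients as the paper---Lemmas~\ref{lem:BP_up}/\ref{lem:BP_low} for the one-sided monotonicity of $\eE Q_i$, Lemma~\ref{lem:fixed-factor} for the uniform ratio $\mu$, and the divergence of $\sum_i \lr_i$---so the approaches coincide in substance. The only cosmetic difference is bookkeeping: you telescope first to obtain $\sum_i \lr_i \sum_q |\eE Q_i(q)-\bfc^*(q)| < \infty$ and then argue by contradiction that the (monotone) summand must vanish, whereas the paper derives the one-step multiplicative bound $\sum_q (\eE Q_{i+1}(q)-\bfc^*(q)) \leq (1-\mu\, p_{\min}\,\lr_i)\sum_q (\eE Q_i(q)-\bfc^*(q))$ directly and iterates it to $\prod_i (1-\mu\, p_{\min}\,\lr_i)=0$.
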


\begin{proof}
We proof this for $\kappa \geq 1$. A similar proof shows this for any $\kappa \in [0,1]$.
Lemma \ref{lem:BP_up} provides that all $\eE Q_i$ satisfy the constraints $\eE Q_i \geq \bfc^*$ and $T(\eE(Q_i))  \leq \eE Q_i$.

Let $p_{\min}$ be the smallest probability any Q-value is selected with in each update step. Due to Lemma \ref{lem:fixed-factor}, there is a fixed constant $\mu > 0$ such that 
\[\sum_{q \in \TT} |Q_i (q) - T(Q_i)(q)|
\geq \mu \sum_{q \in \TT} |Q_i(q) - \bfc^*(q)| \enspace. \]
By taking the expected value of both sides and the fact that $\bfc^* \leq T(\eE Q_i) \leq \eE Q_{i+1} \leq \eE Q_i$ due to Lemma \ref{lem:BP_up}, we get
\[\sum_{q \in \TT} \eE Q_i (q) - T(\eE Q_i)(q)
\geq \mu \sum_{q \in \TT} \eE Q_i(q) - \bfc^*(q),\] then due to (\ref{eq:exp-target}) we have
\[\sum_{q \in \TT} \eE Q_i(q) - \eE Q_{i+1}(q) \geq \mu p_{\min} \lr_i \sum_{q \in \TT} \eE Q_i(q) - \bfc^*(q),\]
and finally just by rearranging these terms we get
\[\sum_{q \in \TT} \eE Q_{i+1}(q) - \bfc^*(q) \leq (1-\mu p_{\min} \lr_i) \sum_{q \in \TT} \eE Q_i(q) - \bfc^*(q) \enspace.\]

Note that all summands are positive by Lemma \ref{lem:BP_up}.

With $\sum_{i=0}^\infty \lr_i = \infty$, we get that $\sum_{i=0}^\infty \mu p_{\min} \lr_i = \infty$, because $p_{\min}$ and $\mu$ are fixed positive values. This implies that $\prod_{i=0}^{\infty} (1-\mu p_{\min} \lr_i) = 0$ and so the distance between $\eE Q_{i}$ and $\bfc^*$ converges to 0. 
\qed
\end{proof}
Lemma \ref{lem:contract_up} suffices to show convergence of Q-values in expectation.

\begin{theorem}
\label{theo:weak_converge}
When each Q-value is selected for an update with a minimal probability $p_{\min}$ in each step, and $\sum_{i=0}^\infty \lr_i = \infty$, then
$\lim_{i \rightarrow \infty} \eE Q_i = \bfc^*$ holds for every starting Q-values $Q_0 \geq \bfzero$.
\end{theorem}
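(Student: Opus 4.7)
Plan for the proof: the strategy is to sandwich an arbitrary non-negative starting vector $Q_0$ between two scalar multiples of $\bfc^*$ and then reduce to Lemma~\ref{lem:contract_up}. Since $\bfc^*$ is assumed finite in this subsection and every one-step cost is strictly positive, each entry of $\bfc^*$ satisfies $\bfc^*_q \geq c(q) > 0$, so I can define $\kappa_1 = 0$ and $\kappa_2 = \max\bigl\{1,\,\max_{q \in \TT} Q_0(q)/\bfc^*_q\bigr\}$; these are well defined and satisfy $\kappa_1 \bfc^* = \bfzero \leq Q_0 \leq \kappa_2 \bfc^*$ componentwise.

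Next I would run three coupled Q-learning processes $Q^{\mathrm{low}}_i$, $Q_i$, $Q^{\mathrm{high}}_i$ on a common probability space, all started from $\kappa_1 \bfc^*$, $Q_0$, and $\kappa_2 \bfc^*$ respectively, and all using the same random draws at each step: the same type $q_i$ is selected for updating, the same offspring list $\beta_i$ is drawn from $p(q_i)$, and the same learning rate $\lr_i$ is applied. For BMCs the update of the selected entry is
\[
Q_{i+1}(q_i) = (1-\lr_i)\, Q_i(q_i) + \lr_i\Bigl(c(q_i) + \sum_{j=1}^{|\beta_i|} Q_i(\beta_{i,j})\Bigr),
\]
which is affine in $Q_i$ with non-negative coefficients (since $\lr_i \in [0,1]$), and every non-updated entry is unchanged. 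A straightforward induction on $i$ therefore yields, pathwise, $Q^{\mathrm{low}}_i \leq Q_i \leq Q^{\mathrm{high}}_i$ for every $i \in \mathbb{N}$.

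Taking expectations preserves these componentwise inequalities, giving $\eE Q^{\mathrm{low}}_i \leq \eE Q_i \leq \eE Q^{\mathrm{high}}_i$. Since $\kappa_1 \in [0,1]$ and $\kappa_2 \geq 1$, Lemma~\ref{lem:contract_up} applies to both bounding sequences and forces $\eE Q^{\mathrm{low}}_i \to \bfc^*$ and $\eE Q^{\mathrm{high}}_i \to \bfc^*$ as $i \to \infty$. The squeeze theorem then delivers $\lim_{i \to \infty} \eE Q_i = \bfc^*$, as required.

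The only subtlety I expect is spelling out the coupling so that the three processes are genuinely driven by the same noise (same selected type, same sampled offspring, same learning rate), because this is what turns the expectation-level statement of Lemma~\ref{lem:contract_up} into a pathwise comparison that survives taking expectations at the end. Beyond that, everything is routine: pointwise monotonicity of an affine update with non-negative coefficients is immediate, and the sandwich $\bfzero \leq Q_0 \leq \kappa_2 \bfc^*$ exists precisely because $\bfc^*$ has no zero entries.
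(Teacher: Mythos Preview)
Your proposal is correct and follows essentially the same approach as the paper: sandwich $Q_0$ between $\bfzero = 0\cdot\bfc^*$ and $\kappa\bfc^*$ (using that $\bfc^*$ has no zero entries), invoke Lemma~\ref{lem:contract_up} for the two bounding processes, and conclude by the squeeze theorem. The paper asserts the monotonicity of $Q_i$ in $Q_0$ in a single line, whereas you spell out the underlying coupling explicitly; otherwise the arguments coincide.
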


\begin{proof}
We first note that none of the entries of $\bfc^*$ can be $0$. This implies that there is a scalar factor $\kappa \geq 0$ such that $\bfzero \leq Q_0 \leq \kappa \bfc^*$.
As the $Q_i$ are monotone in the entries of $Q_0$, and as the property holds for $Q_0' = \bfzero = 0 \cdot \bfc^* $ and $Q_0'' = \kappa \bfc^*$ by Lemma \ref{lem:contract_up}, the squeeze theorem implies that it also holds for $Q_0$.
\qed
\end{proof}

Convergence of the expected value is a weaker property than expected convergence, which also explains why our assumptions are weaker than in Theorem \ref{thm:q-learning}. With the common assumption of sufficiently fast falling learning rates, $\sum_{i=0}^\infty {\lr_i}^2 < \infty$, we will now argue that the pointwise limes inferior of the sequence of Q-values almost surely converges to $\bfc^*$. This will later allow us to infer convergence of the actual sequence of Q-values to $\bfc^*$. 

\begin{theorem}
\label{theo:strong}
When each Q-value is selected for update with a minimal probability $p_{\min}$ in each step, 
\[
\sum_{i=0}^\infty \lr_i = \infty \text{ and } \sum_{i=0}^\infty \lr_i^2 < \infty,
\]
then
$\lim_{i \rightarrow \infty} Q_i = \bfc^*$ holds almost surely for every starting Q-values $Q_0 \geq \bfzero$.
\end{theorem}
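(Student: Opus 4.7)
The plan is to strengthen the in-expectation convergence of Theorem~\ref{theo:weak_converge} to almost sure convergence by casting the Q-learning iteration as a coordinate-asynchronous stochastic approximation whose martingale noise is controlled by the condition $\sum_i \lr_i^2 < \infty$. Writing $I_i^q$ for the indicator that coordinate $q$ is chosen at step $i$ and $S_i(q) = c(q) + \sum_{j=1}^{|\beta|} Q_i(\beta_j)$ for an offspring list $\beta \sim p(q)$, the update becomes
\[
  Q_{i+1}(q) = (1 - \lr_i I_i^q)\, Q_i(q) + \lr_i I_i^q \bigl(T(Q_i)(q) + W_i(q)\bigr),
\]
with $W_i(q) := S_i(q) - T(Q_i)(q)$ a mean-zero martingale difference conditional on selection. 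Since $T$ is affine with unique fixed point $\bfc^*$, letting $\Delta_i := Q_i - \bfc^*$ gives $T(Q_i) - \bfc^* = B \Delta_i$, so the recursion is a linear random contraction plus additive martingale noise.

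The structural key is that $B$ is a strict contraction in a suitable vector norm. Because $\bfc^* = \sum_{k \geq 0} B^k \bfc$ is finite with $\bfc > \bfzero$, the Neumann series converges and $\rho(B) < 1$. A standard result from matrix analysis then yields a weighted max-norm $\|x\|_v := \max_q |x(q)|/v(q)$, for a strictly positive weight $v$, and some $\beta \in (0,1)$ with $\|B x\|_v \leq \beta \|x\|_v$. This is the pathwise counterpart of the linear decay already exploited in the proof of Lemma~\ref{lem:contract_up}.

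Following the route announced in the preamble, I will first show $\liminf_{i \to \infty} Q_i = \bfc^*$ almost surely. The inequality $\liminf Q_i \leq \bfc^*$ follows from Theorem~\ref{theo:weak_converge}: should $Q_i(q)$ stay above $\bfc^*(q) + \varepsilon$ on an event of positive probability, the expectations $\eE Q_i(q)$ could not descend to $\bfc^*(q)$ (Fatou combined with the upper monotone coupling of Lemma~\ref{lem:BP_up}). The converse $\liminf Q_i \geq \bfc^*$ is obtained by coupling from below with a run started at $\bfzero$, whose expected values rise monotonically to $\bfc^*$ by Lemma~\ref{lem:BP_low}; the pathwise fluctuations around this expected trajectory are controlled by applying Kolmogorov's $L^2$ martingale convergence theorem to the cumulative noise series $\sum_j \lr_j I_j^q W_j(q)$, which converges almost surely thanks to $\sum_i \lr_i^2 < \infty$ and the uniformly bounded conditional variance of $W_i(q)$.

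The passage from $\liminf Q_i = \bfc^*$ to $\lim Q_i = \bfc^*$ is the main obstacle, and is where the $\beta$-contraction really bites. Whenever $\|\Delta_i\|_v$ is small, one mean step shrinks $\eE[\|\Delta_{i+1}\|_v \mid \mathcal{F}_i]$ by a factor $(1 - c\,\lr_i)$ with some $c > 0$ depending on $\beta$ and $p_{\min}$, while the martingale increment contributes conditional variance of order $\lr_i^2$. Summing and invoking a standard asynchronous Robbins-Monro-type theorem then yields $\|\Delta_i\|_v \to 0$ almost surely. The subtlest technical point is to ensure that the conditional variance of $W_i(q)$, which is of order $\|Q_i\|_\infty^2$ since offspring lists have uniformly bounded length, stays finite along the trajectory. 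This will be secured by dominating $Q_i$ from above by a companion process initialised at $\kappa \bfc^*$ for $\kappa$ large enough that $Q_0 \leq \kappa \bfc^*$: Lemma~\ref{lem:BP_up} gives expectations bounded by $\kappa \bfc^*$, and a Doob-type maximal inequality upgrades this to pathwise boundedness almost surely.
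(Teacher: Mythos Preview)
Your proposal follows the classical stochastic-approximation route (Tsitsiklis-style asynchronous contraction plus martingale noise), which is genuinely different from the paper's argument. The paper never invokes $\rho(B)<1$ or a weighted max-norm; instead it argues by contradiction directly on the pointwise $\liminf$: if $\widehat Q := \liminf_i Q_i$ were strictly below $\bfc^*$ in some coordinate, then one can exhibit a type $q$ with $\widehat Q(q) < T(\widehat Q)(q)$ (using $T(\kappa\bfc^*) = \kappa\bfc^* + (1-\kappa)\bfc$ at the minimal-ratio coordinate), and on the positive-probability event where the liminf sits near $\widehat Q$ the sequence is eventually \emph{lower}-bounded by $\widehat Q - 2\varepsilon\bfone$, which forces the targets $T(Q_i)(q)$ to stay above $\widehat Q(q)+3\varepsilon$. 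Since $\sum_i \lr_i^2<\infty$ makes the accumulated noise vanish, this contradicts $\liminf Q_i(q) < \widehat Q(q)+\varepsilon$. The key point is that this argument needs only an eventual \emph{lower} bound on $Q_i$, which comes for free from the liminf hypothesis.

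Your route, by contrast, needs an a~priori pathwise \emph{upper} bound on $\|Q_i\|_\infty$, and this is exactly where the proposal has a real gap. You write that Lemma~\ref{lem:BP_up} gives $\eE Q_i' \leq \kappa\bfc^*$ for the companion process and that ``a Doob-type maximal inequality upgrades this to pathwise boundedness almost surely.'' But $Q_i'$ is neither a martingale nor a supermartingale: the conditional drift $\eE[Q_{i+1}'(q)\mid\mathcal F_i] - Q_i'(q) = \lr_i p_q\bigl(T(Q_i')(q)-Q_i'(q)\bigr)$ has no sign pathwise (Lemma~\ref{lem:BP_up} only controls the drift \emph{in expectation}, after averaging over $\mathcal F_i$). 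So no Doob inequality applies, and a uniform bound on $\eE Q_i'$ does not by itself yield $\sup_i Q_i' < \infty$ almost surely. Without that bound, the conditional variance of $W_i(q)$, which you correctly identify as being of order $\|Q_i\|_\infty^2$, is not controlled, and both the Kolmogorov step and the ``standard asynchronous Robbins--Monro-type theorem'' you invoke (whose hypotheses include bounded or square-integrable noise, as in Tsitsiklis' 1994 result) fail to apply. The paper's Conclusion flags precisely this as the obstacle that makes the problem nonstandard: the Q-table is not a~priori bounded, so the variation of the disturbance is not bounded either. Your contraction machinery would work cleanly once boundedness is secured, but the proposal does not secure it.
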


\begin{proof}
We assume for contradiction that, for some $\widehat{Q} \neq \bfc^*$, there is a non-zero chance of a sequence $\{Q_i\}_{i \in \mathbb N_0}$ such that
\begin{itemize}
\item $\|\widehat{Q}-\liminf_{i \rightarrow \infty} Q_i\|_\infty < \varepsilon'$ for all $\varepsilon' > 0$, and
\item there is a type $q$ such that
$\widehat Q(q) < T(\widehat Q)(q)$.
\end{itemize}
Then there must be an $\varepsilon>0$ such that $\widehat{Q}(q)+3 \varepsilon < T(\widehat{Q} - 2  \varepsilon\cdot\bfone)(q)$.
We fix such an $\varepsilon>0$.

Now we have the assumption that the probability of $\|\widehat{Q}-\liminf_{n \rightarrow \infty} Q_i\|_\infty < \varepsilon$ is positive.
Then, in particular, the chance that, at the same time,
$\liminf_{i \rightarrow \infty} Q_i > \widehat{Q} - \varepsilon\cdot\bfone$ \emph{and} $\liminf_{i \rightarrow \infty} Q_i < \widehat{Q} + \varepsilon \cdot \bfone$, is positive.

Thus, there is a positive chance that the following holds:
there exists an $n_\varepsilon$ such that, for all $i>n_\varepsilon$,  $Q_i \geq \widehat{Q} - 2 \varepsilon\cdot\bfone$.
This implies 
\[
T(Q_i)(q) \geq T(\widehat{Q} - 2 \varepsilon\cdot\bfone)(q) > \widehat Q(q)+ 3 \varepsilon.
\]

Thus, the expected limit value of $Q_i(q)$ is at least $\widehat Q(q)+3\varepsilon$, for every tail of the update sequence.
Now, we can use $\widehat Q - 2\varepsilon$ as a bound on the estimation of the updates in $Q$-learning as
 $Q_i \geq \widehat{Q} - 2 \varepsilon\cdot\bfone$ holds.
At the same time, the variation of the sum of the updates goes to $0$ when $\sum_{i+0}^\infty \lr_i^2$ is bounded.
Therefore, it cannot be that $\liminf_{i \rightarrow \infty} Q_i < \widehat{Q} + \varepsilon \cdot \bfone$ holds; a contradiction.

We note that if, for a Q-values $Q\geq \bfzero$, there is a $q\in \TT$ with $Q(q') < \bfc^*(q')$, then there is a $q \in \TT$ with $Q(q) < T(Q)(q)$ and $Q(q) < \bfc^*(q)$.
This is because, for the Q-values $Q'$ with $Q'(q) = \min\{Q(q),\bfc^*(q)\}$ for all $q\in Q$, $Q' < \bfc^*$. Thus, there must be a type $q\in \TT$ such that $\kappa = \frac{Q'(q)}{\bfc^*(q)} < 1$ is minimal, and $Q' \geq \kappa \bfc^*$.
As we have shown before, $T(\kappa \bfc^*) = \kappa \bfc^* - (\kappa-1) \bfc$, such that the following holds:
\[
T(Q)(q) \geq T(Q')(q) \geq T(\kappa \bfc^*)(q) = \kappa \bfc^*(q) + (1-\kappa) c(q) > \bfc^*(q) = Q(q).
\]
Thus, we have that $\liminf_{i \rightarrow \infty} Q_i \geq \bfc^*$ holds almost surely.
With $\lim_{i \rightarrow \infty} \eE Q_i = \bfc^*$, it follows that
$\lim_{i \rightarrow \infty} Q_i = \bfc^*$.
\qed
\end{proof}

\subsection{Convergence for BMDPs and finite $\bfc^*$}
We start with showing that, for BMDPs, the pointwise limes superior of each sequence is almost surely less than or equal to $\bfc^*$.
We then proceed to show that the limes inferior of a sequence is almost surely $\bfc^*$, which together implies almost sure convergence.

\begin{lemma}
\label{lem:bdpsup}
When each Q-value of BMDP is selected for update with a minimal probability $p_{\min}$ in each step, $\sum_{i=0}^\infty \lr_i = \infty$, $\sum_{i=0}^\infty \lr_i^2 < \infty$, then
$\limsup_{i \rightarrow \infty} Q_i \leq \bfc^*$ holds almost surely for every starting Q-values $Q_0 \geq \bfzero$.
\end{lemma}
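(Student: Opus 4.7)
The approach is to dualize the contradiction argument used for the BMC limes inferior in Theorem~\ref{theo:strong}. The first ingredient is a direct computation: for any scalar $\kappa > 1$,
\[
T(\kappa \bfc^*)(q, a) \;=\; c(q, a) + \kappa \sum_{\alpha \in \TT^*} p(q,a)(\alpha) \sum_i \bfc^*(\alpha_i) \;=\; \kappa \bfc^*(q, a) - (\kappa - 1)\, c(q, a) \;<\; \kappa \bfc^*(q, a),
\]
using positivity of $c$ together with the identity $\bfc^*(q, a) = c(q, a) + \sum_\alpha p(q,a)(\alpha) \sum_i \bfc^*(\alpha_i)$. Combined with monotonicity of $T$ (immediate from monotonicity of $\min$ under pointwise order), this gives the key lemma: whenever $\hat Q$ has finite entries and $\hat Q(q_0, a_0) > \bfc^*(q_0, a_0)$ for some $(q_0, a_0)$, there is a coordinate $(q, a)$ with $T(\hat Q)(q, a) < \hat Q(q, a)$ and $\hat Q(q, a) > \bfc^*(q, a)$. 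Indeed, set $Q' := \max(\hat Q, \bfc^*)$ componentwise, pick $(q, a)$ maximising the ratio $\kappa := Q'(q, a)/\bfc^*(q, a) > 1$, and note that $T(\hat Q)(q, a) \leq T(Q')(q, a) \leq T(\kappa \bfc^*)(q, a) < \kappa \bfc^*(q, a) = Q'(q, a) = \hat Q(q, a)$, the last equality because the maximum-ratio coordinate necessarily has $\hat Q(q, a) \geq \bfc^*(q, a)$.

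I would then dualize Theorem~\ref{theo:strong}. Suppose for contradiction that with positive probability the pointwise $\limsup_i Q_i$ equals some $\hat Q$ with finite entries satisfying $\hat Q > \bfc^*$ somewhere. By the lemma, fix $(q, a)$ and $\varepsilon > 0$ with $T(\hat Q)(q, a) + 4\varepsilon < \hat Q(q, a)$; by continuity of $T$ one also has $T(\hat Q + 2\varepsilon \bfone)(q, a) + 3\varepsilon < \hat Q(q, a)$. On the positive-probability event that $Q_i \leq \hat Q + 2\varepsilon \bfone$ for all sufficiently large $i$, the expected update target $T(Q_i)(q, a)$ is bounded above by $T(\hat Q + 2\varepsilon \bfone)(q, a) < \hat Q(q, a) - 3\varepsilon$. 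The hypothesis $\sum_i \lr_i^2 < \infty$ forces the stochastic fluctuations around the conditional mean to have finite cumulative variance; a standard stochastic approximation argument then gives $\limsup_i Q_i(q, a) \leq \hat Q(q, a) - 3\varepsilon + o(1) < \hat Q(q, a)$, contradicting that $\hat Q$ is the pointwise limes superior. A separate (easier) truncation argument rules out coordinates where $\limsup_i Q_i(q, a) = \infty$, using that $\bfc^*$ is everywhere finite.

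The main obstacle is formalising the stochastic approximation step: the one-step random target $c(q, a) + \sum_j \min_{a'} Q_i(\beta_j, a')$ depends both on the offspring realisation $\beta$ and on all current Q-values through the $\min$, so its conditional mean $T(Q_i)(q, a)$ is only piecewise linear in $Q_i$. One must argue that, under the event $Q_i \leq \hat Q + 2\varepsilon \bfone$, the cumulative martingale noise has almost surely bounded oscillation, which is precisely the Robbins-Siegmund-style step that appears (in the dual $\liminf$ form) in the proof of Theorem~\ref{theo:strong}.
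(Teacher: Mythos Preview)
Your approach is genuinely different from the paper's, and your key lemma (if $\hat Q > \bfc^*$ somewhere then $T(\hat Q)(q,a) < \hat Q(q,a)$ at some coordinate with $\hat Q(q,a) > \bfc^*(q,a)$) is correct; the finite-$\hat Q$ contradiction argument then goes through at the same level of rigour as the paper's own proof of Theorem~\ref{theo:strong}. The paper, however, takes a much shorter route: it fixes an optimal static strategy $\sigma^*$ (Corollary~\ref{cor:optimal-static}), builds a BMC whose types are all type--action pairs $(q,a)$ of the BMDP and whose offspring replace each child type $q'$ by the single pair $(q',\sigma^*(q'))$, and runs both Q-learning processes on the \emph{same} random samples. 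Because the BMDP update uses $\min_{a'} Q_i(\cdot,a')$ while the coupled BMC update uses the fixed choice $Q_i'(\cdot,\sigma^*(\cdot))$, an easy induction gives $Q_i \leq Q_i'$ for all $i$; Theorem~\ref{theo:strong} then yields $Q_i' \to \bfc^*$ almost surely, and $\limsup_i Q_i \leq \bfc^*$ drops out immediately.

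The genuine gap in your proposal is the step you call ``easier'': ruling out $\limsup_i Q_i(q,a) = \infty$. This is in fact the hard part. The one-step random targets are not a~priori bounded---offspring of a type $q$ may contain $q$ itself, so the target for $(q,a)$ feeds back on $\min_{a'} Q_i(q,a')$---and finiteness of $\bfc^*$ alone does not bound $Q_i$. Nor can you borrow the expected-value convergence of Theorem~\ref{theo:weak_converge}, since its proof relies on linearity of $T$, which fails once the $\min$ is present. The paper's coupling with the $\sigma^*$-BMC is precisely what supplies this missing a~priori upper bound; but once you have the coupling, $\limsup_i Q_i \leq \lim_i Q_i' = \bfc^*$ already follows, and your direct dualization becomes redundant.
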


\begin{proof}
To show the property for the limes superior, we fix an optimal static strategy $\sigma^*$ that exists due to Corollary \ref{cor:optimal-static}.

We define an BMC obtained by replacing each type $q$ in the BMDP with $A(q) = \{a_1,\ldots,a_k\}$, by $k$ types $(q,a_1), \ldots, (q,a_k)$ with one action,
where each type $q'$ is replaced by the type-action pair $(q',\sigma^*(q'))$.

It is easy to see that a type $(q,\sigma^*(q))$ for the resulting BMC has the same value as the type $q$ and the type-action pair $(q,\sigma^*(q))$ in the BMDP that we started with.

When identifying these corresponding type-action pairs, we can look at the same sampling for the BMDP and the BMC, leading to sequences $Q_0,Q_1, Q_2, \ldots$ and $Q_0',Q_1', Q_2',  \ldots$, respectively, where $Q_0 = Q_0'$.

It is easy to see by induction that $Q_i \leq Q_i'$. Considering that $\{Q_i'\}_{i \in \mathbb N}$ almost surely converges to $\bfc^*$ by Theorem \ref{theo:strong}, we obtain our result.
\qed
\end{proof}

\begin{theorem}
\label{thm:convergence_main}
When each Q-value of an BMDP is selected for update with a minimal probability $p_{\min}$, $\sum_{i=0}^\infty \lr_i = \infty$, $\sum_{i=0}^\infty \lr_i^2 < \infty$, then
$\lim_{i \rightarrow \infty} Q_i = \bfc^*$ holds almost surely for every starting Q-values $Q_0 \geq \bfzero$.
\end{theorem}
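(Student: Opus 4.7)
The plan is to combine Lemma~\ref{lem:bdpsup} with a matching liminf bound. Since that lemma already gives $\limsup_{i\to\infty} Q_i \leq \bfc^*$ almost surely, it suffices to show $\liminf_{i\to\infty} Q_i \geq \bfc^*$ almost surely; the theorem then follows by squeezing the two bounds together.

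The bulk of the work goes into the liminf direction, which I would establish by adapting the contradiction argument of Theorem~\ref{theo:strong} to the BMDP operator $T$. Assume for contradiction that on an event of positive probability some coordinate $(q,a)$ satisfies $\liminf_{i\to\infty} Q_i(q,a) < \bfc^*(q,a)$, and pick $\widehat{Q}$ arbitrarily close to the liminf on that event. Following the last paragraph of the proof of Theorem~\ref{theo:strong}, by clipping $\widehat Q$ from above by $\bfc^*$ and choosing the coordinate $(q^\star,a^\star)$ at which the ratio $\widehat Q/\bfc^*$ is smallest, one can exhibit a coordinate where $\widehat Q(q^\star,a^\star) < T(\widehat Q)(q^\star,a^\star)$ strictly. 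The inequality $T(\kappa \bfc^*) \geq \kappa \bfc^* + (1-\kappa)\bfc$ transfers to the BMDP setting because it only uses monotonicity of $T$ and the positivity of the one-step-cost vector $\bfc$, not linearity. Fix $\varepsilon > 0$ small enough that $\widehat Q(q^\star,a^\star) + 3\varepsilon < T(\widehat Q - 2\varepsilon\bfone)(q^\star,a^\star)$.

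I would then intersect this event with the positive-probability subevent, supplied by Lemma~\ref{lem:bdpsup}, on which $Q_i \leq \bfc^* + \eta\bfone$ eventually for an arbitrarily small $\eta > 0$. On this intersection, $Q_i$ lies in the slab $\widehat Q - 2\varepsilon\bfone \leq Q_i \leq \bfc^* + \eta\bfone$ infinitely often, and monotonicity of $T$ gives $T(Q_i)(q^\star,a^\star) > \widehat Q(q^\star,a^\star) + 3\varepsilon$ at those times. A standard stochastic-approximation argument now applies: noise-induced fluctuations of the $(q^\star,a^\star)$ coordinate are summable because $\sum_i \lr_i^2 < \infty$, while the accumulated drift is unbounded since $\sum_i \lr_i = \infty$ and $(q^\star,a^\star)$ is selected with probability at least $p_{\min}$ in every step. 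This forces $Q_i(q^\star,a^\star)$ to rise and stay a fixed amount above $\widehat Q(q^\star,a^\star) + \varepsilon$ eventually, contradicting $\widehat Q$ being arbitrarily close to the liminf at that coordinate.

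The main obstacle is that the $\min$ operator inside $T$ breaks linearity: unlike the BMC case, one cannot commute $\eE$ with $T$, so the contraction-in-expectation route of Theorem~\ref{theo:weak_converge} is unavailable. The leverage that makes the adapted argument go through is precisely Lemma~\ref{lem:bdpsup}: once $\limsup Q_i \leq \bfc^*$ is in hand almost surely, the target $T(Q_i)$ is asymptotically pinned down from above and the minima inside $T$ behave as in a BMC tracking the optimal static strategy, so the upward-drift mechanism reduces essentially to the same stochastic-approximation argument that drove the BMC proof.
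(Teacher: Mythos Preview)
Your proposal is correct and follows essentially the same approach as the paper: invoke Lemma~\ref{lem:bdpsup} for the $\limsup$ direction, then establish $\liminf_{i\to\infty} Q_i \geq \bfc^*$ by the same contradiction-and-stochastic-approximation argument adapted from Theorem~\ref{theo:strong}. The only cosmetic difference is that the paper picks the witness coordinate as $(q,\sigma^*(q))$ for the optimal static strategy $\sigma^*$ rather than the minimum-ratio coordinate you describe, and you are slightly more explicit about using the eventual upper bound $Q_i \leq \bfc^* + \eta\bfone$ from Lemma~\ref{lem:bdpsup} to control the noise; the structure is otherwise identical.
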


\begin{proof}
As a first simple corollary from Lemma \ref{lem:bdpsup}, we get the same result for the limes inferior (as $\liminf \leq \limsup$ must hold).

We now assume for contradiction that, for some vector $\widehat{Q} < \bfc^*$, there is a non-zero chance of a sequence $\{Q_i\}_{i \in \mathbb N}$ such that $\|\widehat{Q}-\liminf_{n \rightarrow \infty} Q_i\|_\infty < \varepsilon'$ for all $\varepsilon' > 0$.

As $\widehat{Q}$ is below the fixed point of $T$, there must be one type-action pair $(q,\sigma^*(q))$ such that $\widehat{Q}(q,\sigma^*(q)) < T(\widehat{Q})(q,\sigma^*(q))$ (cf.\ the proof of Theorem \ref{theo:strong}).
Moreover, there must be an $\varepsilon>0$ such that 
\[
\widehat{Q}(q,\sigma^*(q))+3 \varepsilon < T(\widehat{Q} + 2  \varepsilon\cdot\bfone)(q,\sigma^*(q)).
\]
We fix such an $\varepsilon>0$.

Now we assume that the probability of $\|\widehat{Q}-\liminf_{n \rightarrow \infty} Q_i\|_\infty < \varepsilon$ is positive.
Then the chance that, simultaneously,
$\liminf_{i \rightarrow \infty} Q_i(q,\sigma^*(q)) > \widehat{Q}(q,\sigma^*(q)) - \varepsilon$ and $\liminf_{i \rightarrow \infty} Q_i(q,\sigma^*(q)) < \widehat{Q}(q,\sigma^*(q)) + \varepsilon$, is positive.

Thus, there is a positive chance that the following holds:
there exists an $n_\varepsilon$ such that, for all $i>n_\varepsilon$ we have $Q_i \geq \widehat{Q} - 2 \varepsilon\cdot\bfone$. This entails 
\[
T(Q_i)(q,\sigma^*(q)) \geq T(\widehat{Q} - 2 \varepsilon\cdot\bfone)(q,\sigma^*(q)) > \widehat Q(q,\sigma^*(q))+ 3 \varepsilon.
\]
Thus, the expected limit value of $Q_i(q,\sigma^*(a))$ is at least $\widehat Q(q,\sigma^*(a))+3\varepsilon$, for every tail of the update sequence.
Now, we can use $T(\widehat Q - 2\varepsilon\cdot\bfone)(q,\sigma^*(a))$ as a bound on the estimation of $T(Q)(q,\sigma^*(q))$ during the update of the Q-value of the type-action pair $(q,\sigma^*(q))$.
At the same time, the variation of the sum of the updates goes to $0$ when $\sum_{i=0}^\infty \lr_i^2$ is bounded.
Therefore, it cannot be that $\liminf_{i \rightarrow \infty} Q_i(q,\sigma^*(a)) < \widehat{Q}(q,\sigma^*(a)) + \varepsilon$ holds; a contradiction.
\qed
\end{proof}

\subsection{Divergence}
We now show divergence of $Q(q)$ to $\infty$ when at least one of the entries of $\bfc^*(q)$ is infinite. First due to Theorem \ref{thm:unique-fix} and its proof we have that $\bfc^* = \sum_{i=0}^\infty B^i \bfc$ for some non-negative $B$ and positive $\bfc$. Therefore $\bfc^*$ is monotonic in $B$ for BMCs. Likewise, the value of $\bfc^*$ for a BMDP depends only on the cost function and the expected number of successors of each type spawned:
Two BMDPs with same cost functions and the expected numbers of successors have the same fixed point $\bfc^*$. Thus, if a type $q$ with one action spawns either exactly one $q$ or exactly one $q'$ with a chance of $50\%$ each, or if it spawns $10$ successors of type $q$ and another $10$ or type $q'$ with a chance of $5\%$, while dying without offspring with a chance of $95\%$, both lead to identical matrices $B$ and so the same $\bfc^*$ (though this difference may impact the performance of Q-learning).

Naturally, raising the number of expected number of successors of any type for any type-action pair strictly raises $\bfc^*$, while lowering it reduces $\bfc^*$, and for every set of expected numbers, the value of $\bfc^*$ is either finite or infinite.

Let us consider a set of parameters at the fringe of finite vs.\ infinite $\bfc^*$, and let us choose them pointwise not larger than the parameters from the BMC or BMDP under consideration.
As the fixed point from Section \ref{sec:fixed} is clearly growing continuously in the parameter values, this set of expected successors leads to a $\bfc^*$ which is not finite.

We now look at the family of parameter values that lead to $\alpha \in [0,1[$ times the expected successors from our chosen parameter at the fringe between finite and infinite values, and refer to it as the $\alpha$-BMDP. Let also $\bfc^*_\alpha$ denote the fixed point for the reduced parameters.
As the solution to the fixed point grows continuously, so does $\bfc^*_\alpha$.
Moreover, if $\bfc^*_1 = \lim_{\alpha \rightarrow 1} \bfc^*_\alpha$ was finite, then $\bfc^*$ would be finite as well, because then $\bfc^*_1 = \bfc^*$.

Clearly, for all parameters $\alpha \in [0,1[$, the Q-values of an $\alpha$-BMC or $\alpha$-BMDP converge against $\bfc^*_{\alpha}$.
Thus, the Q-values for the BMC or BMDP we have started with converges against a value, which is at least $\sup_{\alpha \in [0,1[}\bfc^*_\alpha$. As this is not a finite value, Q-learning diverges to $\infty$.

\section{Experimental Results}
\label{sec:experiments}
We implemented the algorithm described in the previous section in the formal reinforcement learning tool \toolname{}~\cite{mungojerrie},
a C++-based tool  which reads BMDPs described in an extension of
the PRISM language \cite{kwiatk11}.
The tool provides an
interface for RL algorithms akin to that of \cite{Brockm16} and invokes a linear
programming tool (GLOP)~\cite{ortools} to compute the optimal expected total cost based on the
optimality equations~(\ref{eq:fixed-eq}).

\subsection{Benchmark Suite}
The BMDPs on which we tested Q-learning are listed in
Table~\ref{tab:experiment}.
For each model, the numbers of types in the BMDP, are given.
Table~\ref{tab:experiment} also shows the total cost (as computed by the LP solver), which has full
access to the BMDP.
This is followed by the estimate of the total cost computed by Q-learning
and the time taken by learning.  The learner has several hyperparameters: $\epsilon$ is the exploration rate, $\alpha$ is the learning rate, and
$\mathrm{tol}$ is the tolerance for $Q$-values to be considered different when selecting an optimal strategy.
Finally, ep-l is the maximum episode length and ep-n is the
number of episodes. The last two columns of Table~\ref{tab:experiment} report the
values of ep-l and ep-n when they deviate from the default values.
All performance data are the averages of three
trials with Q-learning.  Since costs are undiscounted, the value
of a state-action pair computed by Q-learning is a direct estimate of
the optimal total cost from that state when taking that action.

\begin{table}[t]
  \caption{Q-learning results.  The default values of the learner
    hyperparameters are: $\epsilon=0.1$, $\alpha=0.1$,
    tol$=0.01$, ep-l$=30$, and ep-n$=20000$.  Times are in seconds.}
  \label{tab:experiment}
  \centering
 \begin{tabular}[c]{l|ccccccccccc}
    Name & ~types~ & ~optimal cost~ & ~estimated cost~ & ~time (avg.)~  & ~ep-l~ & ~ep-n~ \\\hline
    \texttt{cloud1}     & 3   & 5     & 5.026    & 0.369    &       &     \\
    \texttt{cloud2}   & 4   &  5    & 5.016    &  0.369  &       &     \\
    \texttt{bacteria1}   & 3   & 2.5     & 2.514    & 0.374   &              &       \\
    \texttt{bacteria2}   & 3   & 1.34831     & 1.413    & 0.387  &              &      \\
    \texttt{protein}   & 3   & 6     &  5.067   &  0.372 &              &      \\
    \texttt{frozenSmall}   & 16   & 1.84615  & 1.740    & 2.834  &    100   &      \\
    \texttt{rand68}     & 10   & 150.432     & 154.400    & 0.402    &       &      \\
     \texttt{rand283}     & 9   & 4     &   4  &  0.075   &    &  1000    \\
     \texttt{rand945}     & 19   & 212     & 208.177    & 10.756  &  200   &  40000    \\
     \texttt{rand3242}     & 43   & 4     &   4.372 & 5.960   &  100  &     \\
     \texttt{rand6417}     & 62   & 10     & 10    &   12.498     & 50    &      \\
 \end{tabular}
\end{table}

Models \texttt{cloud1} and \texttt{cloud2} are based on the motivating example given in the
introduction.
Examples \texttt{bacteria1} and \texttt{bacteria2} model the population
dynamics of a family of two bacteria~\cite{trivedi2010timed} subject to two treatments.
The objective is to determine which treatment results in the minimum expected cost to extinction of the bacteria population.
The \texttt{protein} example models a stochastic Petri net description~\cite{MunskyK06}
corresponding to a protein synthesis example with entities corresponding to
active and inactive genes and proteins.
The example \texttt{frozenSmall}~\cite{Brockm16} is similar to classical frozen
lake example, except that one of the holes result in branching the process in two
entities. 
Entities that fall in the target cell become extinct.
The objective is to determine a strategy that results in a minimum number of steps before extinction.
Finally, the remaining $5$ examples are randomly created BMDP instances.

\section{Conclusion}
\label{sec:conclusion}
We study the total reward optimisation problem for branching decision processes with unknown probability distributions, and give the first reinforcement learning algorithm to compute an optimal policy.
Extending Q-learning is hard, even for branching processes, because they lack a central property of the standard convergence proof: as the value range of the Q-table is not a priori bounded for a given starting table $Q_0$, the variation of the disturbance is not bounded.
This looks like a more substantial obstacle than the one Q-learning faces when maximising undiscounted rewards for finite-state MDPs, and it is well known that this defeats Q-learning.
So it is quite surprising that we could not only show that Q-learning works for branching processes, but extend these results to branching decision processes, too. 
Finally, in the previous section, we have demonstrated that our Q-learning algorithm works well on examples of reasonable size even with default hyperparameters, so it is ready to be applied in practice without the need for excessive hyperparameter tuning.

\bibliographystyle{splncs04}
\bibliography{papers}

\end{document}